\newtheorem{lemma}{Lemma}
\newtheorem{theorem}{Theorem}
\theoremstyle{definition}
\begin{document}
%
\title{Multi-Robot Trajectory Planning with Feasibility Guarantee and Deadlock Resolution: An Obstacle-Dense Environment }
%
%
%

\author{Yuda Chen, Chenghan Wang, Meng Guo and  Zhongkui Li,~\IEEEmembership{Senior Member,~IEEE}
\thanks{
	The authors are with the State Key Laboratory for Turbulence and
	Complex Systems, Department of Mechanics and Engineering Science,
	College of Engineering, Peking University, Beijing 100871, China.
        Corresponding author: \texttt{zhongkli@pku.edu.cn}.
}
}

%
%

\markboth{Journal of \LaTeX\ Class Files,~Vol.~14, No.~8, August~2015}%
{Shell \MakeLowercase{\textit{et al.}}: Bare Demo of IEEEtran.cls for IEEE Journals}

\maketitle

\pagestyle{empty}  
\thispagestyle{empty} 

\begin{abstract}
	This article presents a multi-robot trajectory planning method which not only guarantees optimization feasibility and but also resolves deadlocks in obstacle-dense environments.
	The method is proposed via formulating a recursive optimization problem, where a novel safe corridor is generated online to ensure obstacle avoidance in trajectory planning.
	A dynamic-priority mechanism is combined with the right-hand rule to handle potential deadlocks that are much harder to resolve due to static obstacles.
	Comparisons with other state-of-the-art results are conducted to validate the improved safety and success rate.
	Additional hardware experiments are carried out with up to eight nano-quadrotors in various cluttered scenarios.
\end{abstract}

\begin{IEEEkeywords}
Trajectory generation, motion planning, multi-robot system, collision avoidance, deadlock resolution.
\end{IEEEkeywords}

%
\IEEEpeerreviewmaketitle


\section{Introduction}

\IEEEPARstart{C}{ollision}-free trajectory planning plays an essential role for a swarm of robots navigating in a shared environment~\cite{Chung2018}.
The robots need to avoid collision with both other robots and obstacles while moving to their targets.
Various techniques are proposed, among which optimization-based methods~\cite{Luis2019,Jesus2021} have gained significant attentions due to their flexibility of adding constraints, such as the convex constraints in~\cite{Luis2019,Augugliaro2012} and the non-convex constraints in~\cite{Jesus2021}.
Nevertheless, the constrained optimization may suffer from the infeasibility, leading to failed planning.
This problem becomes specially severe in a crowed environment with dense obstacles.
Moreover, without a central coordinator, it often happens that the robots block each other indefinitely  and no further progresses can be made, which is also known as deadlocks~\cite{Alonso2018}.

To address the issue of infeasibility, the work in~\cite{Zhou2021} converts hard constraints into soft ones by adding them to the objective function.
But the safety properties associated with the constraints such as collision avoidance cannot be ensured anymore.
Our previous work in~\cite{Chen2022} can provably ensure the feasibility of underlying optimization, which however cannot deal with the obstacle collision.
To handle obstacles during motion, one of the most widely-adopted method is safe corridor, e.g., \cite{Deits2015}~generates a high-quality corridor via semi-definite programming, and~\cite{Liu2017} combines the search-based path planning and geometry-based corridor construction.
Unfortunately, the hard constraints introduced by the safe corridor may also lead to infeasible optimizations, as discussed in \cite{Toumieh2022,Senbaslar2021}.
In addition, the authors in \cite{Park2020,Park2022} propose a cubic corridor in a grid map.
Although this method is computationally efficient  to generate safe corridor and can guarantee feasibility, it is restricted by the cubic shape that may have a lower space efficiency.
Furthermore, the method in~\cite{Honig2018} generates a safe corridor via supported vector machines, which has a higher utility rate of freespace, but is centralized and computed offline.
Another way to construct a safe corridor is the voxel expansion proposed in~\cite{Toumieh2022,Gao2020} on a grid map,
which however is also computed offline.

\begin{figure}[t!]
	\centering
	\includegraphics[width=0.85\linewidth]{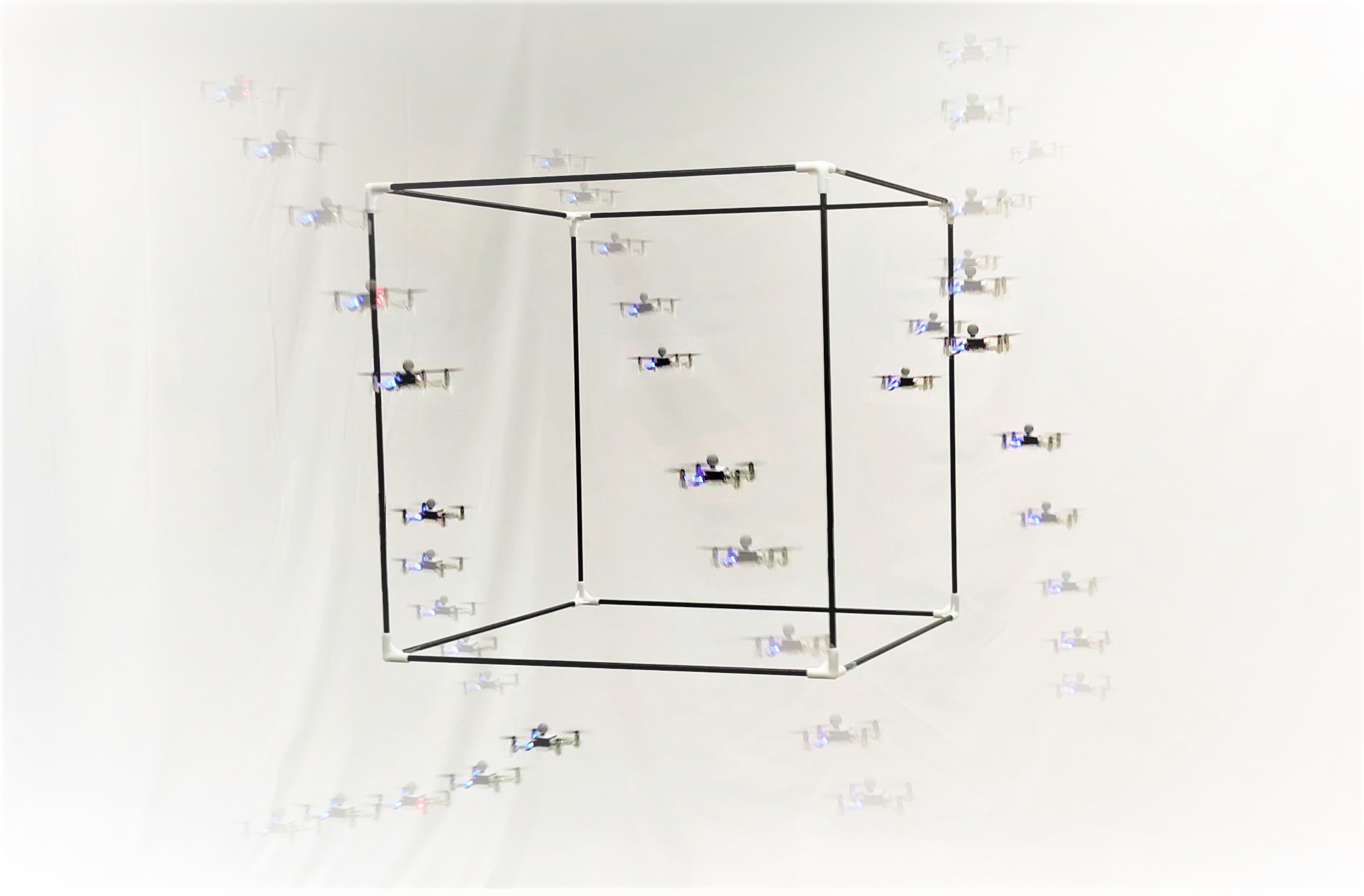}
	\caption{Eight nano-quadrotors fly through a cubic framework.}
	\label{8-cube-framework}
\end{figure}

As mentioned earlier, another prominent problem in distributed trajectory planning is deadlock when robots block each other indefinitely during motion, as also addressed in~\cite{Park2022,Wang2017}.
Various heuristic approaches are proposed to resolve deadlocks, by introducing e.g., the right-hand rule~\cite{Toumieh2022,Zhou2017}, priority planning~\cite{Park2022}, and detour points~\cite{Abdullhak2021}.
The common drawback of these approaches is that the safety of robots or the effectiveness of deadlock resolution is not ensured and usually can lead to livelock where deadlocks re-appear after resolution.
Our previous work~\cite{Chen2022} proposes an adaptive right-hand rule for deadlock resolution in obstacle-free space.
Unfortunately, it cannot be applied to obstacle-dense environments because static obstacles are non-cooperative and cannot be pushed away by the right-hand force.

To address these two issues formally in obstacle-dense environments, we first propose a novel method to construct safe corridors online which provide feasible space for trajectory generation.
It adopts the path planning method ABIT$^\star$ \cite{Strub2020} to determine a reference path, based on which the separating hyperplanes are computed via quadratic programming between the imminent obstacles and the planned trajectory.
Regarding deadlocks in obstacle-dense environments, a dynamic-priority mechanism is proposed along with the adaptive right-hand rule.
Invalidation of the right-hand rule would induce a higher priority thus more egoistic to continue its motion.

The main contributions are summarized as follows.

\begin{itemize}
\item[$\bullet$]
The proposed online generation of safe corridor as a sequence of polytopes has a much higher utilization of the workspace than the cubic safe corridor proposed in~\cite{Park2020,Park2022}.
It is shown to reduce computational cost and increase safety margin, compared with~\cite{Deits2015}.

\item[$\bullet$]
The proposed dynamic-priority mechanism resolves potential deadlocks whenever the adaptive right-hand rule \cite{Chen2022} is invalidated due to static obstacles.

\item[$\bullet$] Comparisons with several state-of-the-art methods \cite{Jesus2021,Zhou2021,Park2022} are made in non-trivial cluttered scenarios.
  It is validated that the proposed method has a better performance in terms of guaranteeing collision avoidance as well as handling deadlocks.
\item[$\bullet$] Extensive hardware experiments are conducted to validate the real-time performance, including eight nano-quadrotors navigating in obstacle-dense 3D scenarios,
  as shown in Fig.~\ref{8-cube-framework}.

\end{itemize}

\section{Problem Statement and Preliminaries}

Consider a team of~$N$ robots that navigate to their respective destinations within a common 2D or 3D workspace that are cluttered with static obstacles.
During the navigation, a robot cannot collide with either other robots or obstacles.
Each robot can determine its own control input and exchange information with other robots via communication.

\subsection{Robot Dynamics}
Let $h>0$ denote the sampling time.
The dynamic model of the robot~$i\in \mathcal{N}=\{1,\cdots,N\}$ is given by
\begin{equation} \label{eq:dynamic-constraint}
	x_{k}^{i}(t)=\mathbf{A} x_{k-1}^{i}(t)+\mathbf{B} u_{k-1}^{i}(t),
\end{equation}
where~$k\in \mathcal{K} \triangleq \{1,\cdots,K\}$ is the step within the planning horizon~$K>0$;
$p^i_k(t),\, v^i_k(t),\, u^i_k(t)$ are the \emph{planned} position, velocity, and control input at time $t + k h$, respectively;
$x^i_k(t)=[p^i_k(t),\, v^i_k(t)]$ is the planned state at time $t+kh$ with $x^i_0(t)=x^i(t)$;
the double-integrator dynamics are given by
$\mathbf{A}
=
\left[
\begin{array}{ccc}
	\mathbf{I}_d & h \mathbf{I}_d  \\
	\mathbf{0}_d & \mathbf{I}_d \\
\end{array}
\right]$,
$\mathbf{B}
=
\left[
\begin{array}{ccc}
	\frac{h^2}{2} \mathbf{I}_d   \\
	h \mathbf{I}_d
\end{array}
\right]$
with the dimension $d=2,3$.
The planned trajectory of robot $i$ at the time~$t>0$ is defined as
$\mathcal{P}^i(t)=\left\{ p^i_1(t),p^i_2(t),\cdots,p^i_K(t) \right\} $.
Additionally, the velocity and input constraints are given as
\begin{equation} \label{eq:input-state-constraints}
  \begin{split}
	\| \Theta_a u^i_{k-1}(t) \|_2 &\le a_{\text{max}} ,\ k \in \mathcal{K},\\
	\| \Theta_v v_k^i \|_2 &\le v_{\text{max}}, \  k \in \mathcal{K},
\end{split}
\end{equation}
where $\Theta_v,\, \Theta_a$ are given positive-definite matrices, and $v_{\text{max}},\, a_{\text{max}}$ denote the maximum velocity and acceleration, respectively.

Once the planned trajectory is computed, a lower feedback controller is followed to
drive robot~$i$ within the time interval $\left[ t,\, t+h \right]$,
such that $x^i(t+h)=x^i_1(t)$ holds when robot~$i$ re-plans at time~$t+h$.
Furthermore, at each time step~$t$, robot~$i$ sends its predetermined trajectory~$\overline{\mathcal{P}}^i(t)$ to other robot~$j \neq i$, which is constructed directly from the planned trajectory~$\mathcal{P}^i(t-h)$ derived at~$t-h$.
More specifically, $\overline{\mathcal{P}}^i(t)=\left\{ \overline{p}_{1}^{i}(t),\, \overline{p}_{2}^{i}(t),\, \cdots,\, \overline{p}_{K}^{i}(t) \right\} $, where
$\overline{p}_{k}^{i}(t) = p^i_{k+1}(t-h)$, $\forall k \in \tilde{\mathcal{K}} \triangleq \{1,2,\cdots,K-1\}$ and $\overline{p}^i_{K}(t) \triangleq p^i_{K}(t-h)$.
For the sake of simplicity, the time index~$t$ is omitted in the sequel, whenever ambiguity is not caused.

\subsection{Collision Avoidance}

\subsubsection{Inter-Robot Collision Avoidance}
The safety area around each robot~$i\in \mathcal{N}$ is represented by a ball as $\mathcal{R}^i = \left\{ x + p^i \ | \ \|x\|_2 \leq r_a \right\}$, where $r_a>0$ is a given safety radius.
To avoid inter-robot collisions, any pair of robots should satisfy $ \| p^i - p^j \|_2 \geq 2 r_a$.
This constraint can be converted equivalently to linear constraints via
the modified buffered Voronoi cells with warning band (MBVC-WB) proposed in our previous work~\cite{Chen2022} as follows:
\begin{subequations} \label{eq:inter-constraint}
	\begin{align}
		&{a_{k}^{i j}}^\mathrm{T} p_{k}^{i} \geq b_{k}^{i j}, \; \forall j\neq i,\, \forall k \in \tilde{\mathcal{K}},  \label{eq:a p > b 1} \\
		&{a_{K}^{i j}}^\mathrm{T} p_{K}^{i} \geq b_{K}^{i j} + w^{i j},\; \forall j \neq i \label{eq:a p > b 2}.
	\end{align}
\end{subequations}
where
$a_{k}^{i j}=\frac{ \overline{p}_{k}^{i}-\overline{p}_{k}^{j} } { \|\overline{p}_{k}^{i}-\overline{p}_{k}^{j}\|_{2} }$,
$b_{k}^{i j}={a_{k}^{i j}}^\mathrm{T} \frac{\overline{p}_{k}^{i} + \overline{p}_{k}^{j}}{2}+\frac{r_{\text{min}}}{2}$,
$r_{\text{min}} = \sqrt{ 4 { r_a }^2+h^{2} v_{\text{max} }^{2}}$ denotes the extended minimum distance, and $w^{i j}\in [0,\, \epsilon]$ is an additional variable as the warning band with the maximum
width~$\epsilon>0$.

\subsubsection{Obstacle Avoidance}
Let $\mathcal{O}\subset \mathbb{R}^d$ denote the set of obstacles' occupied space.
Similar to~\cite{Senbaslar2021}, the obstacles are assumed to be convex-shaped.
Thus, the constraint of obstacle avoidance requires that the safety area of each robot does not intersect with any obstacle, i.e., $\mathcal{R}^i \cap \mathcal{O} = \emptyset$, $\forall i\in \mathcal{N}$.
Namely, a planned trajectory $\mathcal{P}$ is collision-free, if
\begin{equation} \label{eq:oringinal-obstacle-constraint}
	\textbf{Conv} (\left\{ p^i_k, p^i_{k+1} \right\}) \cap  \tilde{\mathcal{O}} = \emptyset, \, \forall k \in \tilde{ \mathcal{K} },
\end{equation}
holds, where $\tilde{\mathcal{O}}$ is the set of occupied space after inflating obstacles by $r_\text{a}$ and
$\textbf{Conv} (P) \triangleq \left\{ \sum_{i=1}^n \theta_i p_i \ | \ p_i \in P, \ \theta_i \geq 0, \ \sum_i \theta_i =1, \ i=1,2,\cdots,n \right\}$ is the convex hull formed by points in the set $P$.

\subsection{Problem Statement}
Assuming that all robots are collision-free initially at time~$t_0$,
our goal is to design a distributed trajectory planning algorithm that drives each robot to its target~$p^i_{\text{target}}$, $\forall i \in \mathcal{N}$,
while respecting the constraints specified in~\eqref{eq:dynamic-constraint}-\eqref{eq:oringinal-obstacle-constraint}.

\section{Trajectory Planning Method}

The overall trajectory planning algorithm will be described in this section,
which consists of the construction of safe corridor, the deadlock resolution scheme, the trajectory optimization algorithm and the proof of feasibility guarantee.

\subsection{Safe Corridor Construction} \label{subsection:safe-corrdior}

The obstacle avoidance is realized by restricting the planned trajectory to be inside a safe corridor.
The corridor is constituted by a sequence of convex polytopes of which the boundaries separate the planned positions $p^i_k$, $i \in \mathcal{N}$, $k \in \mathcal{K}$ and inflated obstacles.
Based on the safe corridor, the obstacle avoidance constraint can be written as
\begin{equation} \label{eq:obstacle-constraint}
	{a_{k}^{i,o}}^\mathrm{T} p_{k}^{i} \geq b_{k}^{i,o}, k \in \mathcal{K},
\end{equation}
where $a_{k}^{i,o}$ and $b_{k}^{i,o}$ determines the boundary of the corridor for robot~$i$ at horizon $k \in \mathcal{K}$ as $\mathcal{B}^i_k = \{ p \ | \ {a_{k}^{i,o}}^\mathrm{T} p = b_{k}^{i,o} \}$.
As shown in Fig.~\ref{corridor}, the safe corridor is constructed in three main steps: path planning, segmentation of EPT and computation of the separating hyperplanes.

\subsubsection{Path Planning}

To begin with, a reference path $\Gamma^i = \{ \overline{p}^i_K, \cdots, p^i_{\text{target}} \}$ is generated for each robot~$i$, which connects the terminal horizon position of~$\overline{p}^i_K$ and its target $p^i_{\text{target}}$ as shown in Fig~\ref{get_path}.
In this work, the algorithm of Advanced Batch Informed Trees (ABIT$^\star$) proposed in~\cite{Strub2020} is adopted to plan this reference path.
Once the path $\Gamma^i$ is obtained, a tractive point denoted by~$p^i_{\text{tractive}}$ is found as $p^i_{\text{tractive}} = \Gamma^i_{m^\star} $, where~$m^\star$ is the maximum index~$m$ such that
\begin{equation} \label{get-tractive-point}
\textbf{Conv}  (\left\{ \Gamma^i_m, \overline{p}^i_K \right\}) \cap  \tilde{\mathcal{O}} = \emptyset,
\end{equation}
where $\Gamma^i_m$ is the $m$-th point of $\Gamma^i$.
An illustration is shown in Fig.~\ref{get_segment_list}.
Then, the extended predetermined trajectory (EPT) is defined as $\tilde{\mathcal{P}}^i = \left\{ \overline{p}_{1}^{i},\cdots,\overline{p}_{K+1}^{i} \right\} $  where $\overline{p}_{K+1}^{i} = p^i_{\text{tractive}}$.
Notably, it is unnecessary to update the path at every step as tractive point~$\Gamma^i$ can be determined via~\eqref{get-tractive-point} given the previous path.
In other words, the robot only needs to replay its path if its target is not changed.

\subsubsection{Segmentation of EPT}
In order to reduce the computational complexity, the derived EPT~$\tilde{\mathcal{P}}^i$ is divided into $N_s$ segments, denoted by~$\mathcal{S}^i_n$, $n = 1,2,\cdots, N_s$, as illustrated in Fig~\ref{get_segment_list}.
These segments are computed in three main steps:
First, $\mathcal{S}^i_1$ is initialized as $\mathcal{S}^i_1 = \left\{ \overline{p}_{K+1}^{i} \right\}$.
Then, the point~$\overline{p}_{k}^{i}$ is added into the current segment $\mathcal{S}^i_1$ consecutively
from $k=K$ to $k=1$, until the convex hull formed by~$\mathcal{S}^i_1$ is not collision-free anymore, i.e., $\textbf{Conv} (\mathcal{S}^i_1) \cap \tilde{\mathcal{O}} \neq \emptyset$.
Afterwards, the next segment $\mathcal{S}^i_2$ is initialized as $\mathcal{S}^i_2 = \left\{ \overline{p}_{k}^{i} \right\}$ where $\overline{p}_{k}^{i}$ is the last point of $\mathcal{S}^i_1$.
The above process is repeated until the starting point~$\overline{p}^i_1$ is added into the last segment $\mathcal{S}^i_{N_s}$.

\begin{figure}[t!]
	\centering
	\subfigure[Path planning.]{
		\includegraphics[width=0.45\linewidth]{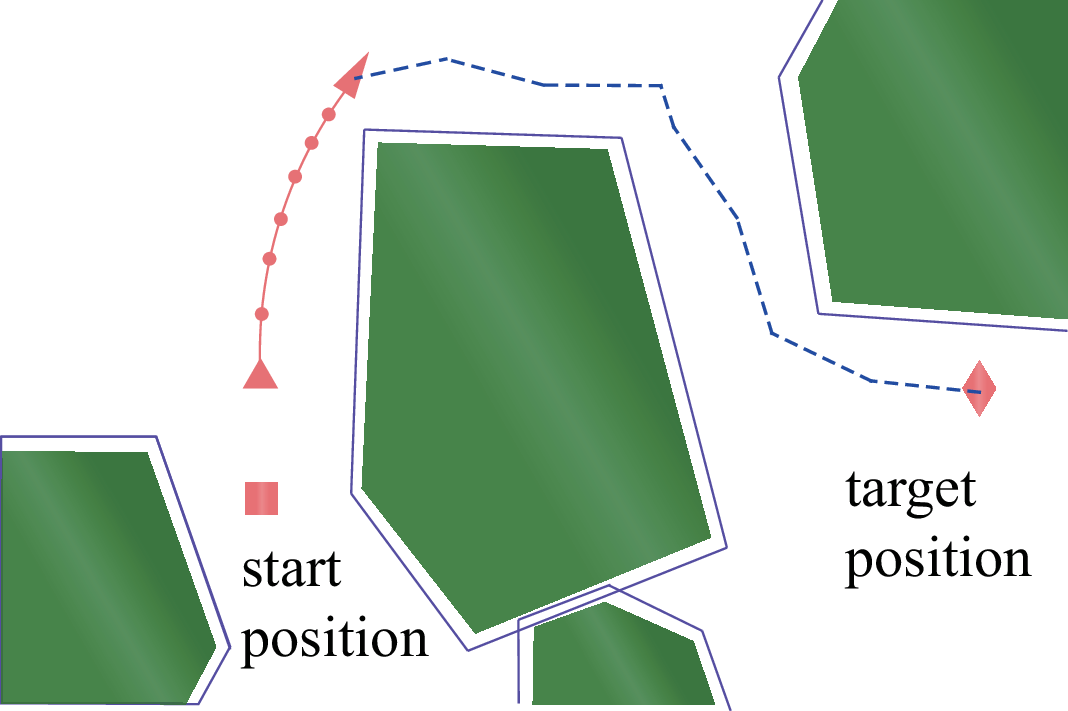}
		\label{get_path}
	}
	\subfigure[Segmentation of EPT.]{
		\includegraphics[width=0.45\linewidth]{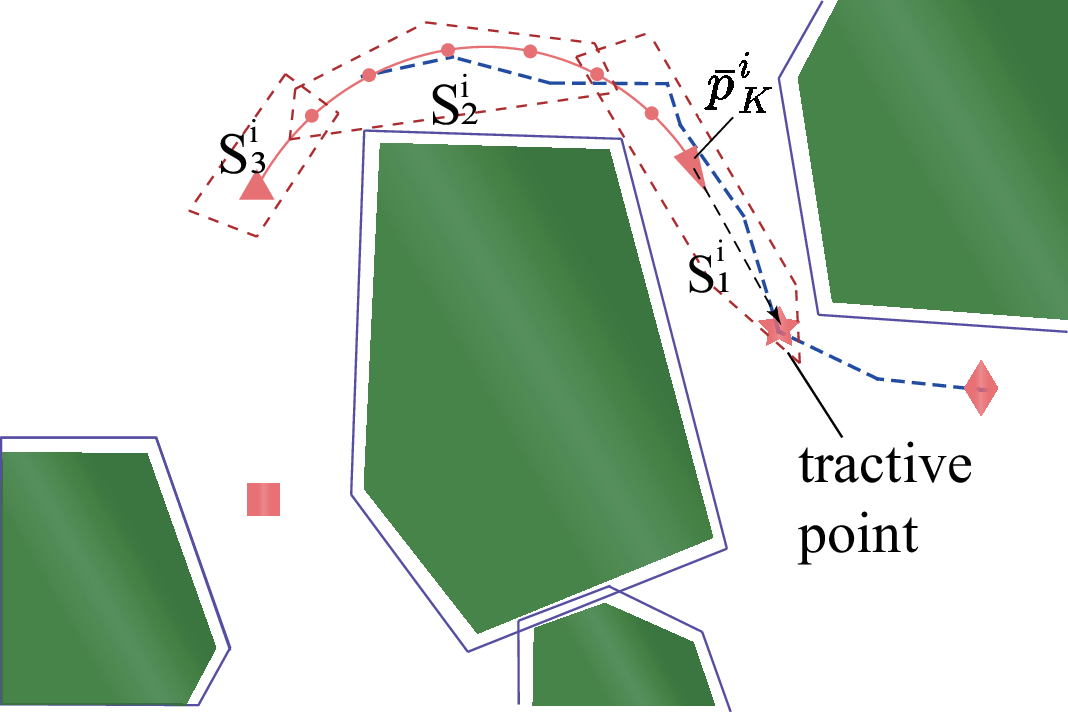}
		\label{get_segment_list}
	}
	\quad
	\subfigure[Computation of separating hyperplane.]{
		\includegraphics[width=0.45\linewidth]{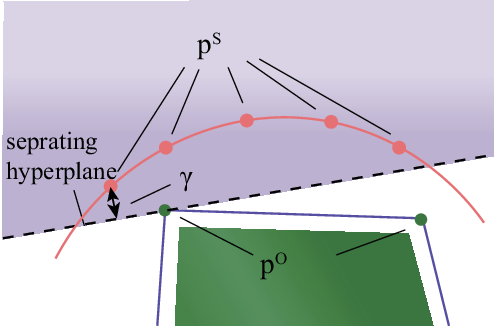}
		\label{get_separating_plane}
	}
	\subfigure[The final safe corridor.]{
		\includegraphics[width=0.45\linewidth ]{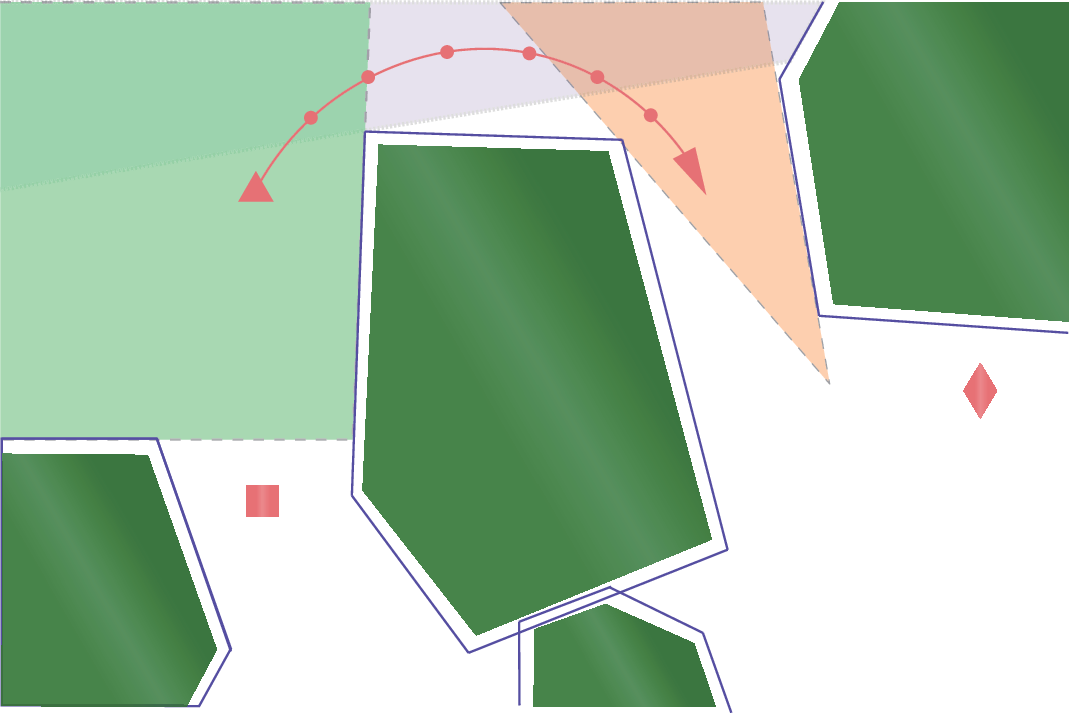}
		\label{get_corridor}
	}
	\caption{Illustration of the procedure to construct the safe corridor. Solid lines around the green area are inflated obstacles.}
	\label{corridor}
\end{figure}

\subsubsection{Computing Separating Hyperplane}
After the segmentation of EPT is done, the separating hyperplanes between the $n$-th segments~$\mathcal{S}^i_n$ and obstacles are constructed.
Since the convex hull of $\mathcal{S}^i_n$ is obstacle-free and also the obstacles are convex-shaped, a separating hyperplane exists according to separating hyperplane theorem~\cite{Boyd2004}.
Then, as shown in Fig.~\ref{get_separating_plane}, an optimization-based method can be provided as follows:
\begin{equation} \label{QP1}
	\begin{aligned}
		& \max _{ a, b , \gamma }  \; \gamma ,   \\
		{\rm s.t.},\ &  a^\mathrm{T} p^{\text{S}}  \geq \gamma + b,   \\
		&  {a}^\mathrm{T} p^{O}  \leq  b ,  \\
		&  \| a \|_{2} = 1 ,   \\
		& \gamma \geq 0,
	\end{aligned}
\end{equation}
where $a$ and $b$ determine the separating hyperplane; $p^{\text{S}} \in \mathcal{S}^i_n$ and $p^O$ denote the points of an obstacle; $\gamma$ is the margin variable.
Such an optimization can be further transformed to the following quadratic programming (QP):
\begin{equation} \label{QP2}
	\begin{aligned}
		& \min _{ a^{\prime}, b^{\prime}  }    \ \| a^{\prime} \|_2^2  \\
		{\rm s.t.},\ &  { a^{\prime} }^\mathrm{T} p^S  \geq 1 + b^{\prime},   \\
		&  { a^{\prime} }^\mathrm{T} p^O  \leq  b^{\prime},  \\
	\end{aligned}
\end{equation}
where $a^{\prime}=\frac{a}{\gamma}$ and $b^{\prime}=\frac{b}{\gamma}$.
By solving the QP in~\eqref{QP2}, the separating hyperplane can be obtained as $a=\frac{a^{\prime}}{\|a^{\prime}\|_2}$ and $b=\frac{b^{\prime}}{\|a^{\prime}\|_2}$.
Moreover, $a_{k}^{i,o}$ and $b_{k}^{i,o}$ are chosen as $a$ and $b$ to formulate the constraints in~\eqref{eq:obstacle-constraint}.

In practice, the separating hyperplanes between different segments and obstacles are constructed in a sequential manner.
Specifically, starting from the nearest obstacle to the farthest one, a separating hyperplane is built if this obstacle has a contact with the convex polytopes formed by the existing hyperplanes.
Otherwise, such an obstacle is omitted since it has been separated by the existing hyperplanes.

\begin{lemma} \label{lemma-1}
	The proposed safe corridor has the following three properties:
\begin{enumerate}
  \item If the predetermined trajectory $\overline{\mathcal{P}}^i$ is obstacle-free, then a safe corridor can be generated.
  \item The predetermined trajectory $\overline{\mathcal{P}}^i$ satisfies the constraints in~\eqref{eq:obstacle-constraint}, i.e., ${a_{k}^{i,o}}^\mathrm{T} \overline{p}_{k}^{i} \geq b_{k}^{i,o}$, $k \in \mathcal{K}$.
  \item If the constraints in~\eqref{eq:obstacle-constraint} are satisfied, then the planned trajectory $\mathcal{P}^i$ is obstacle-free.
\end{enumerate}
\end{lemma}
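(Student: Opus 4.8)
The plan is to establish the three properties sequentially, with the separating hyperplane theorem and the convexity of the generated polyhedra doing most of the work; the unifying observation is that every point of the extended predetermined trajectory (EPT) lies inside an obstacle-free convex hull and that each corridor plane is produced by the feasible program \eqref{QP1}.

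\textbf{Property 1.} First I would show that the corridor always exists when $\overline{\mathcal{P}}$ is obstacle-free. Since each $\overline{p}_k^i$ is individually obstacle-free and the segment-division rule extends a segment only while the convex hull of its points remains collision-free, every segment is a well-defined obstacle-free convex hull and, because the EPT has finitely many points, the division terminates. Fixing a segment and any relevant convex obstacle, its convex hull and the obstacle are disjoint convex sets, so by the separating hyperplane theorem \cite{Boyd2004} a separating plane exists; equivalently \eqref{QP1} (hence the QP \eqref{QP2}) is feasible and returns a valid pair $(a,b)$. Collecting these planes over all segments and relevant obstacles assembles the sequence of polyhedra, so a safe corridor is generated.

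\textbf{Property 2.} This should follow almost immediately from the structure of \eqref{QP1}. For each horizon $k$, the point $\overline{p}_k^i$ is one of the free points $p^{\text{free}}$ that the plane $(a_k^{i,o},b_k^{i,o})$ of its segment must separate, so the first constraint of \eqref{QP1} gives ${a_k^{i,o}}^T \overline{p}_k^i \ge \gamma + b_k^{i,o}$; since $\gamma \ge 0$ this yields ${a_k^{i,o}}^T \overline{p}_k^i \ge b_k^{i,o}$, which is exactly \eqref{eq: obstacle-constraint} for $\overline{\mathcal{P}}$, in fact with a nonnegative margin $\gamma$.

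\textbf{Property 3.} Satisfying \eqref{eq: obstacle-constraint} forces each $p_k^i$ into the convex polyhedron of its segment, and by construction that polyhedron lies entirely on the free side of all its planes, because the second constraint of \eqref{QP1} places every inflated obstacle in the complementary half-space; hence safety at the sampling times is immediate. For the interval times I would invoke convexity: all horizons belonging to one segment are constrained by the same polyhedron and consecutive segments share a boundary point, so $p_1^i,\ldots,p_K^i$ lies in a union of obstacle-free convex polyhedra and the straight chord between consecutive positions stays inside a common polyhedron, avoiding $\tilde{\mathcal{O}}$. The hard part will be that the true inter-sample motion of the double integrator \eqref{eq: dynamic constraint} is a parabolic arc rather than this chord, so convexity does not directly cover it; I expect to close the gap by bounding the arc's deviation from the chord, which equals $\frac{1}{2} u_{k-1}^i\,\tau(\tau-h)$ and thus has magnitude at most $\frac{h^2}{8}\|u_{k-1}^i\|$ under the bounded-acceleration constraint \eqref{eq: input-constraint}, and absorbing this deviation into the inflation radius $r_a$ or the separating margin, in the same spirit as the extended distance $r_{\min}'$ secures interval-time inter-robot safety.
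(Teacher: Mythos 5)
Your proposal follows essentially the same route as the paper's own proof: Property 1 via termination of the segment division plus the separating hyperplane theorem guaranteeing feasibility of \eqref{QP1}, Property 2 by reading off the first constraint of \eqref{QP1} with $\overline{p}^i_k$ among the $p^{\text{free}}$, and Property 3 by placing consecutive planned positions in a common obstacle-free convex polyhedron so that the chord between them is safe. The one point where you go beyond the paper is the closing observation in Property 3: the paper's proof stops at the chord argument and never addresses the fact that the inter-sample motion of the double integrator \eqref{eq: dynamic constraint} under zero-order-hold input is a parabolic arc, not the straight segment between $p^i_k$ and $p^i_{k+1}$. Your bound of $\frac{h^2}{8}\|u^i_{k-1}\|$ on the arc's deviation from the chord is correct, and absorbing it into the inflation radius (analogously to how $r'_{\min}$ handles inter-sample inter-robot safety) is a legitimate way to close that gap; the paper simply leaves this implicit. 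So your argument is, if anything, more careful than the published one on this point, and otherwise matches it step for step.
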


\begin{proof}
	First,
	since $\textbf{Conv} (\left\{ p^i_{\text{tractive}},\, \overline{p}^i_K \right\}) \cap  \tilde{\mathcal{O}} = \emptyset$, a collision-free $\tilde{\mathcal{P}}^i$ can be found.
	Consider the most conservative segmentation that the segments are formed by every adjacent two points of $\tilde{\mathcal{P}}^i$.
	Since there always exists separating hyperplanes between the points of segment and any of obstacle, the convex polytope can be formed.
	Consequently, the safe corridor can be constructed as a sequence of polytopes.
	Second, since the constraints in~\eqref{eq:obstacle-constraint} are obtained from the optimization in~\eqref{QP1} with $p^S$ chosen from the predetermined trajectory~$\overline{\mathcal{P}}^i$,
	it is clear that~$\overline{\mathcal{P}}^i$ satisfies these constraints.
	Lastly, due to the given segmentation method, $\overline{p}^i_k$ and $\overline{p}^i_{k+1}$, $k \in \tilde{\mathcal{K}}$ belong to the same segment.
	Thus, if the constraints in~\eqref{eq:obstacle-constraint} are enforced, the planned trajectory~$p^i_k$ and $p^i_{k+1}$ are restricted to a common collision-free polytope.
	Consequently, the original constraints in~\eqref{eq:oringinal-obstacle-constraint} hold and the planned trajectory is collision-free.
\end{proof}

\subsection{Deadlock Resolution} \label{sub: deadlock-resolution}
After the constraints for obstacle avoidance is transformed into~\eqref{eq:obstacle-constraint}, the overall trajectory optimization at each time step is formulated as follows:
\begin{subequations} \label{eq:final-mpc}
	\begin{align}
	  & \textbf{min}_{\{u_{k-1}^{i}, x_k^{i}, w^{i j} \}} \;  C^{i} \label{eq:objective} \\
\text { s.t. } 	  &  \eqref{eq:dynamic-constraint},\eqref{eq:input-state-constraints},  \eqref{eq:inter-constraint}, \eqref{eq:obstacle-constraint},\; k \in \mathcal{K},\, \forall j \ne i,\notag \\
	  & \;	v^i_K=\mathbf{0}_d,  \label{eq:terminal-constraint}
	\end{align}
\end{subequations}
where~$C^i$ is the objective function defined in the sequel;
constraints in~\eqref{eq:dynamic-constraint} and \eqref{eq:input-state-constraints}
stem from the complex dynamic model and constraints;
constraints in~\eqref{eq:inter-constraint} and \eqref{eq:obstacle-constraint} ensure the inter-robot and robot-obstacle collision avoidance;
constraints in~\eqref{eq:terminal-constraint} is imposed to ensure the feasibility of underlying optimization, as done similarly in \cite{Chen2022,Jesus2021}.

The objective function in~\eqref{eq:objective} is defined as $C^i = C^i_p + C^i_w$,
where~$C^i_p$ is the balanced term between the distance to the tractive point and the consecutive velocities:
\begin{equation*}
C^i_p = \frac{1}{2} Q_K \|p_{K}^{i}-p_{\text {tractive}}^{i}\|_2^2 + \frac{1}{2} \sum_{k=1}^{K-1}Q_{k}\|p_{k+1}^{i}-p_{k}^{i}\|_2^2,
\end{equation*}
where $Q_k>0$, $\forall k \in \mathcal{K}$ are weight parameters.
In addition, $C^i_w$ is the penalty term designed for deadlock resolution as follows:
\begin{equation} \label{eq:C^i_w}
	C_{w}^{i}=\sum_{j \neq i} \rho^{i j} { (\epsilon - w^{i j}) }^2,
\end{equation}
where $\rho^{i j}>0$ is the designed parameter to adjust the repulsive force from robot~$j$ to robot~$i$.
Similar design of~$\rho^{ij}$ is also adopted in~\cite{Chen2022},
which however is derived directly from the priority mechanism and the right-hand rule.

The process of updating $\rho^{i j}$ online is summarized in Algorithm~\ref{alg:deadlock-resolution}.
To begin with, the deadlock detection mechanism is introduced in Line~\ref{algline:deadlock-detecction},
where a boolean signal is introduced as~$flag^i=True$.
A deadlock is detected if the following two conditions hold:
i) $p^i_K(t) \ne p^i_\text{target}$,
$p^i_K(t)=p^i_K(t-h)$,
and $p^i_K(t)=p^i_{K-1}(t)$; and
ii) there exists another robot~$j$, such that~$w^{ij}<\epsilon$ holds.
The reason is that when a potential deadlock happens, the planned terminal position $p_K^i(t)$ of robot~$i$ remains static and the second last planned position $p^i_{K-1}(t)$ approaches~$p_K^i(t)$.
Moreover, in obstacle-dense environments, deadlocks may be induced by static obstacles instead of the blocking of other robots.
To resolve this situation, an additional condition is added to the detection process.
Namely, if there exists robot~$j$ such that~$w^{ij}<\epsilon$ holds, it indicates that robot~$i$ is blocked by at least one another robot.

\begin{algorithm}[t]
	\caption{
			DeadlockResolution()
		} \label{alg:deadlock-resolution}
	\SetKwInOut{Output}{Output}
	\SetKwInOut{Input}{Input}
	\Input{$\overline{\mathcal{P}}^j$, $p_\text{target}^j$, $\psi^j$, $\zeta^j$}
	Obtain $flag^i$ via deadlock detection \label{algline:deadlock-detecction}\;
	Obtain $\eta^i$ from~\eqref{eta^i} \label{algline:update-eta}\;
	\eIf{$\exists t > t_0$, $p_K^i(t)=p_\textup{target}^i$}{
		$\psi^i = \textbf{1}$ \label{algline:ever-reach-target}\;
	}{
		\If{ $\eta^i \geq \eta_\textup{max}$ }{
		  \If{($\zeta^i=True$) \rm{\textbf{and}} ($\psi^j \neq \textbf{3}$, $\forall j$) \rm{\textbf{and}} ($\forall j$, $\zeta^j = True$ \rm{implies}
                    $\|p_K^i-p_\textup{target}^i\|_2 < \|p_K^j-p_\text{target}^j\|_2$)}{
				$\psi^i = \textbf{3}$ \label{algline:get-highest-priority}\;
			}
			$\zeta^i = True$\;
		}
	}
	\If{($\psi^i = \textbf{3}$) \rm{\textbf{and}}  ($w^{ij} = \epsilon$, $\forall j$)}{
		$\psi^i = \textbf{2}$\; \label{algline:resturn-highest-priority}
	}
	\rm{\textbf{return}} $\rho^{ij}$ as \eqref{rho^ij}\; \label{algline:get-rho}
\end{algorithm}

Afterwards, it is followed by the update of a key parameter~$\eta^i(t)$ as in Line~\ref{algline:update-eta},
which represents the magnitude of the deadlock.
And it is updated as follows:
\begin{equation} \label{eta^i}
	\eta^i(t) = \left\{
	\begin{array} {rll}
		&\eta^i(t-h) + \Delta \eta , &\mbox{if}\; flag^i = True; \\
		&\eta_\text{max}           , &  \mbox{if}\; (flag^i = True \\
		&                           &\text{and} \ \eta^i(t-h) \geq \eta_\text{max}); \\
		&0                         , &\mbox{if}\; \  \forall j \ne i, w^{i j}(t-h) = \epsilon;\\
		&\eta^i(t-h)               , &\text{else},
	\end{array}	\right.
\end{equation}
where $\Delta \eta >0$ is a design parameter and $\eta_\text{max}$ is the chosen upper bound of $\eta^i(t)$.

Based on these parameters, the dynamic priority of robot~$i$ denoted by~$\psi^i$ is determined.
A higher priority leads to more egoistic behaviors when avoiding the others.
Initially, all robots have the medium priority as $\psi^i = \textbf{2}$.
Furthermore, if a robot has reached its target, it obtains the lowest priority $\psi^i = \textbf{1}$ as in Line~\ref{algline:ever-reach-target}.
Moreover, if $\eta^i \geq \eta_\text{max}$, i.e., the deadlock measurement has reached the highest magnitude, robot~$i$ is chosen as the candidate that obtains the Priority~\textbf{3} and consequently $\zeta^i = True$.
More importantly, any robot~$i$ actually obtains Priority~\textbf{3}  as in Line~\ref{algline:get-highest-priority} if the following three conditions are all satisfied:
i) it is the one of the candidates, i.e., $\zeta^i = True$;
ii) the highest priority is still vacant;
and iii) compared with other candidates, robot~$i$ has the shortest distance to the target.
By this mechanism, the highest priority is assigned in a distributed way and only one robot is entitled the highest priority.
Notably, the highest priority of one particular robot is not permanent as once it has escaped from the deadlock and has no contact with any others, its highest priority is deprived as in Line~\ref{algline:resturn-highest-priority}.
In other words, the priorities are dynamically changing based on the status of the robots.

After the priorities are determined, the parameter~$\rho^{ij}$ in~\eqref{eq:C^i_w} is computed as follows:
\begin{equation} \label{rho^ij}
	\rho^{ij} = \left\{
	\begin{array} {rll}
		&\frac{\rho_0}{\gamma^{i j}} \, e^{(\eta^i \, \sin \theta^{i j})} ,  &\mbox{if}\; \psi^i = \psi^j; \\
		&\rho_\text{min}         ,                  &\mbox{if}\;   \psi^i > \psi^j;\\
		&\rho_\text{max}         ,                  &\mbox{if}\;   \psi^i < \psi^j,
	\end{array}	\right.
\end{equation}
where the $\theta^{ij}$ is the angle in $x-y$ plane between the projection of the line $\overline{p}^i_K$  to the point~$p^i_{\text{tractive}}$ and to the point~$\overline{p}^j_K$;
$\gamma^{i j}$ is used to adjust the penalty in a proper range,
which is updated as~$\gamma^{i j}(t)=(1-\beta) \gamma^{i j}(t-h) + \beta\, \epsilon\, w^{i j} (t-h)$,
where $\beta \in  [0.5,\, 1)$ and $\gamma^{i j}(t_{0}) = {\epsilon}^2$;
$\rho_0$, $\rho_\text{min}$ and $\rho_\text{max}$ are coefficients which satisfy
$\rho_\text{max} \gg \frac{\rho_0}{\gamma^{ij}} \gg \rho_\text{min}>0$.

In general, since most agents have Priority $\textbf{2}$, they adopt a right-hand rule as $\rho^{ij}=\frac{\rho_0}{\gamma^{i j}} \, e^{(\eta^i \, \sin \theta^{i j})}$ to coordinate with each other.
More specifically, when the deadlock is detected as described above and $\eta^{ij}>0$, the right-hand rule is activated.
For the robot $j$ on the left, i.e., $\theta^{ij}>0$, it holds that~$e^{(\eta^i \, \sin \theta^{i j})} > 1$ which indicates that the repulsive effect from robot~$j$ and robot~$i$ is increased, which keeps robot~$i$ away from robot~$j$.
On the contrary, if robot~$i$ approaches robot~$j$ on its right hand, the coordination strategy is different from the previous case.
More specifically, when the deadlock magnitude~$\eta^i$ exceeds the designated upper bound~$\eta_\text{max}$, its priority is changed as described earlier.
Thus, the robot with a higher priority has a much lower penalty coefficient as $\rho^{ij} = \rho_\text{min}$, which means that it can approach and expel other robots that have lower priorities.
This mechanism is critical in obstacle-dense environments since the right-hand rule may be invalidated when obstacles block the way along with other robots.
An example of such situation is shown in Fig.~\ref{deadlock-in-obstacle}.
However, the robot with the highest priority can neglect the repulsive effect from other robots and continue moving in order to break this logjam.

\begin{figure}[t!]
	\centering
	\includegraphics[width=0.9 \linewidth]{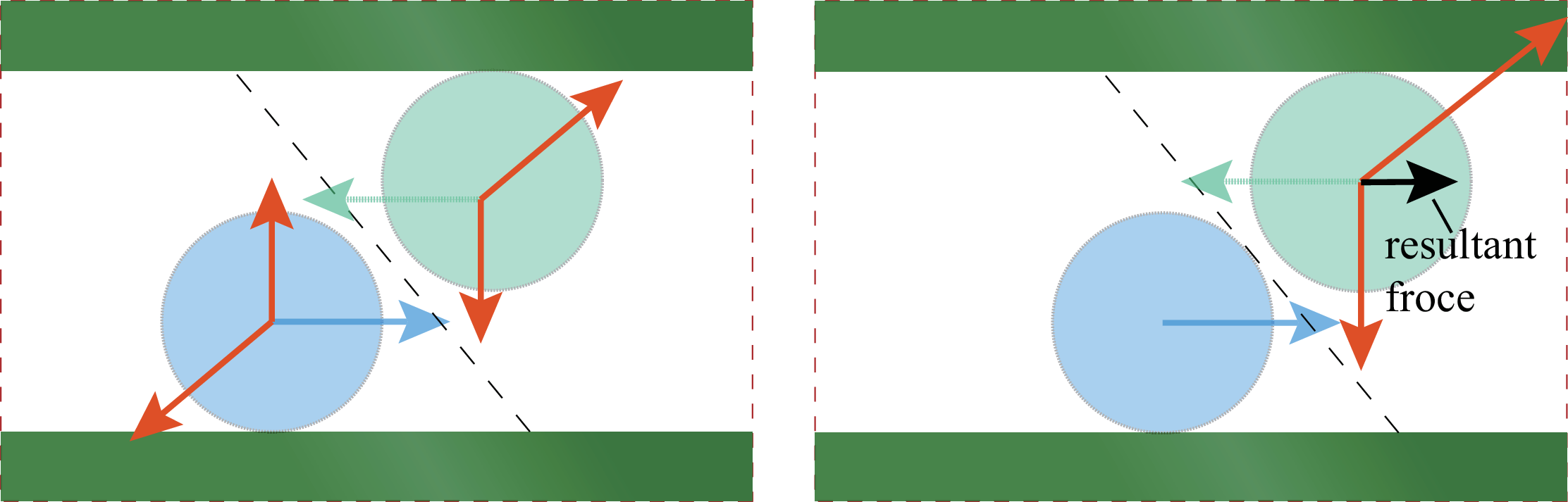}
	\caption{
		\textbf{Left}: the existence of static obstacles can invalidate the right-hand rule. \textbf{Right}: after obtaining the highest priority, the blue robot ignores the repulsive effect from the green robot and continues its motion.
	}
	\label{deadlock-in-obstacle}
\end{figure}

\subsection{Overall Trajectory Optimization Algorithm}
Given the optimization problem in~\eqref{eq:final-mpc} and the deadlock resolution scheme described in the previous section,
the overall trajectory optimization is summarized in Algorithm~\ref{AL:IMPC-OB}.

To begin with, the predetermined trajectory is initialized in Line~\ref{algline:impc-init}.
After initialization, in the main loop, each robot runs the same planning algorithm in a parallel and distributed way as in Line~\ref{algline:impc-each-robot}.
Therein, the packed data $Data^i = \{ \overline{\mathcal{P}}^i,\, p_\text{target}^i,\, \psi^i,\, \zeta^i\}$ is exchanged among the robots, which are essential for formulating and solving the final optimization and enforcing the deadlock resolution scheme, as in Line~\ref{algline:impc-commu}.
In particular, it is followed by the safe corridor construction as descried in Sec.~\ref{subsection:safe-corrdior}  and Line~\ref{algline:impc-obstacle-cons}.
Thereafter, Algorithm~\ref{alg:deadlock-resolution} is followed to resolve potential deadlocks and further obtain the penalty parameter~$\rho^{ij}$ in Line~\ref{algline:deadlock-resolution}.
Thus, the final optimization in~\eqref{eq:final-mpc} is formulated and solved in Line~\ref{algline:convex-programming}.
The resulting trajectory is executed via sending it to the lower-level feedback controller (Line~\ref{algline:impc-execute}).
This procedure is repeated until all robots have reached their target positions.

\begin{algorithm}[t] \label{AL:IMPC-OB}
	\caption{
		The Complete Algorithm
		}\label{algorithm}
	\SetKwInOut{Input}{Input}
	\Input{$p^i(t_0)$, $p^i_\text{target}$,$\mathcal{O}$}
	$\overline{\mathcal{P}}^i(t_0)=\left\{ p^{i}(t_0), \ldots, p^{i}(t_0) \right\}$\label{algline:impc-init}\;
	\While{ \rm{not all robots at target} \label{algline:impc-while}}
	{
		\For{$i\in \mathcal{N}$ \label{algline:impc-each-robot} \rm{concurrently} }{

			obtain $Data^j$, $j \ne i$ via communication\label{algline:impc-commu}\;

			obtain $a_{k}^{i,o}, b_{k}^{i,o}$ via constructing safe corrdior\label{algline:impc-obstacle-cons}\;

			$\rho^{ij} \leftarrow \text{DeadlockResolution}(Data^j)$\label{algline:deadlock-resolution}\;

			obtain $\mathcal{P}^i(t)$ from optimization \eqref{eq:final-mpc}\label{algline:convex-programming}\;

			send $\mathcal{P}^i(t)$ to lower-level controller\label{algline:impc-execute}\;
		}
		$t \leftarrow t+h$\;
	}
\end{algorithm}

The overall computational complexity of the proposed method is analyzed as follows.
The computational cost consists of four main parts: i) finding the tractive points, ii) segmentation of the EPT, iii) computing the separating hyperplanes and iv) solving the final optimization.
To obtain a tractive point, the collision detection between line-polytopes is carried out less than $n_\text{path} \cdot n_\text{proximal}$ times, where $n_\text{path}$ is the number of points in $\Gamma^i$ and $n_\text{proximal}$ is the number of proximal obstacles around the robot.
In general, the collision detection between line-polytopes is a special case of the collision detection between polytopes, which can be done with complexity $\mathcal{O}(n_1+n_2)$ where $n_1$ and $n_2$ are the numbers of vertices in these two polytopes~\cite{Gilbert1988}.
The segmentation of the EPT is finished after checking the collision between two polytopes for $2 K  n_\text{proximal}$ times.
To compute the separating hyperplanes, the worst situation induces $K$ segments and consequently hyperplanes with $n_\text{proximal}$ obstacles are obtained after solving $K n_\text{proximal}$ QPs.
Each of these QPs has $d+1$ variables and $d+n+1$ constraints where $n$ is the number of vertices belonging to the underlying obstacles.
In the final optimization, a quadratically constrained quadratic programming (QCQP) with $Kd+N-1$ variables and $(N+n_\text{priximal}+1)K+2N-1$ constraints is formulated.
Owing to mature optimization tools, e.g., \cite{cvxopt}, the underlying optimization can be resolved within tens of milliseconds.

\subsection{Feasibility Guarantee}

Different from most existing optimization-based methods, e.g., \cite{Jesus2021,Toumieh2022}, the proposed planning algorithm guarantees the recursive feasibility of the optimization problem, as proven in the following theorem.
\begin{theorem}  \label{recursive feeasible}
	If all robots are collision-free with each other and any obstacle initially,
	they remain so under Algorithm~\ref{algorithm}.
\end{theorem}

\begin{proof}
	At the beginning time~$t_0$, it is clear that the predetermined trajectory $\overline{\mathcal{P}}^i(t_0)$ is a feasible solution of optimization \eqref{eq:final-mpc} for all robots.

	As stated in Lemma~\ref{lemma-1}, if trajectory planning at $t-h$ is feasible,
	the planned trajectory at $t-h$ is obstacle-free, based on which a safe corridor can be derived.
	Thus, the final optimization \eqref{eq:final-mpc} can be formulated.
	Given the feasible solution at the previous time step,
	i.e., $u^i_{k-1}(t-h)$ and $x^i_k(t-h)$ for $k \in \mathcal{K}$,
	a feasible solution $x^i_k(t)=x^i_{k+1}(t-h)$, $u^i_k(t)=u^i_{k+1}(t-h)$ can be
	derived where we enforce $x^i_{K}(t)=x^i_{K}(t-h)$ and $u^i_{K-1}(t)=\mathbf{0}_d$.
	More specifically,
	as the result of optimization at time step $t-h$,
	$x^i_{k+1}(t-h)$ and $u^i_{k}(t-h)$ with $k \in \tilde{\mathcal{K}}$, satisfy the constraints in~\eqref{eq:dynamic-constraint} and \eqref{eq:input-state-constraints} naturally.
	In addition, since $x^i_{K}(t)=x^i_{K}(t-h)$, $v^i_{K-1}(t)=v^i_K(t-h)=\mathbf{0}_d$ and $u^i_{K-1}(t)=\mathbf{0}_d$ hold, $x^i_{K}(t)$ and $u^i_{K-1}(t)$ also satisfy these constraints.
	Meanwhile, as $x^i_{K}(t)=x^i_{K}(t-h)=x^i_{K-1}(t)$ holds, it is evident that the constraint \eqref{eq:terminal-constraint} holds as well.
	Then, as proven similarly in Theorem~3 of~\cite{Chen2022},
	the constraints in~\eqref{eq:inter-constraint} are satisfied as well.
	Lastly,
	as stated in the first and third properties of Lemma~\ref{lemma-1},
	the feasible solution at time $t-h$ is obstacle-free and the associated safe corridor can be determined.
	Thus, the constraint in~\eqref{eq:obstacle-constraint} is feasible
	and the optimization in~\eqref{eq:final-mpc} is feasible in a recursive way.

	Since the initial optimizations as well as these successive ones are feasible,
	the enforced constraints are satisfied.
	Due to the third property in Lemma~\ref{lemma-1} and Theorem~1 in \cite{Chen2022},
	the collision avoidance among robots and between robots and obstacles are ensured.
    This completes the proof.
\end{proof}

\begin{figure} [t]
	\centering
	\includegraphics[width=0.95\linewidth]{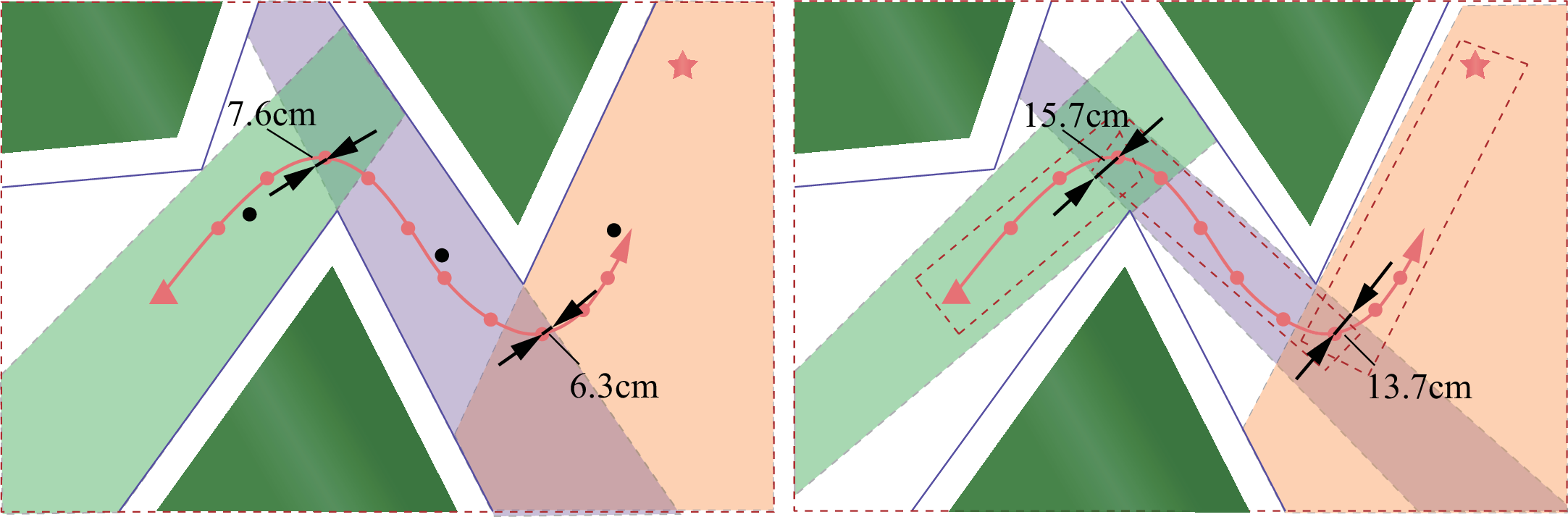}
	\caption{
		\textbf{Left}: the safe corridor generated by IRIS \cite{Deits2015}. \textbf{Right}:  ours safe corridor. In comparison, points of predetermined trajectory generated by our method are further away from the obstacles boundary, yielding a larger safety margin.
		}
	\label{corridor-contrast}
\end{figure}

\begin{figure} [t]
	\centering
	\subfigure{
		\includegraphics[width=0.47\linewidth]{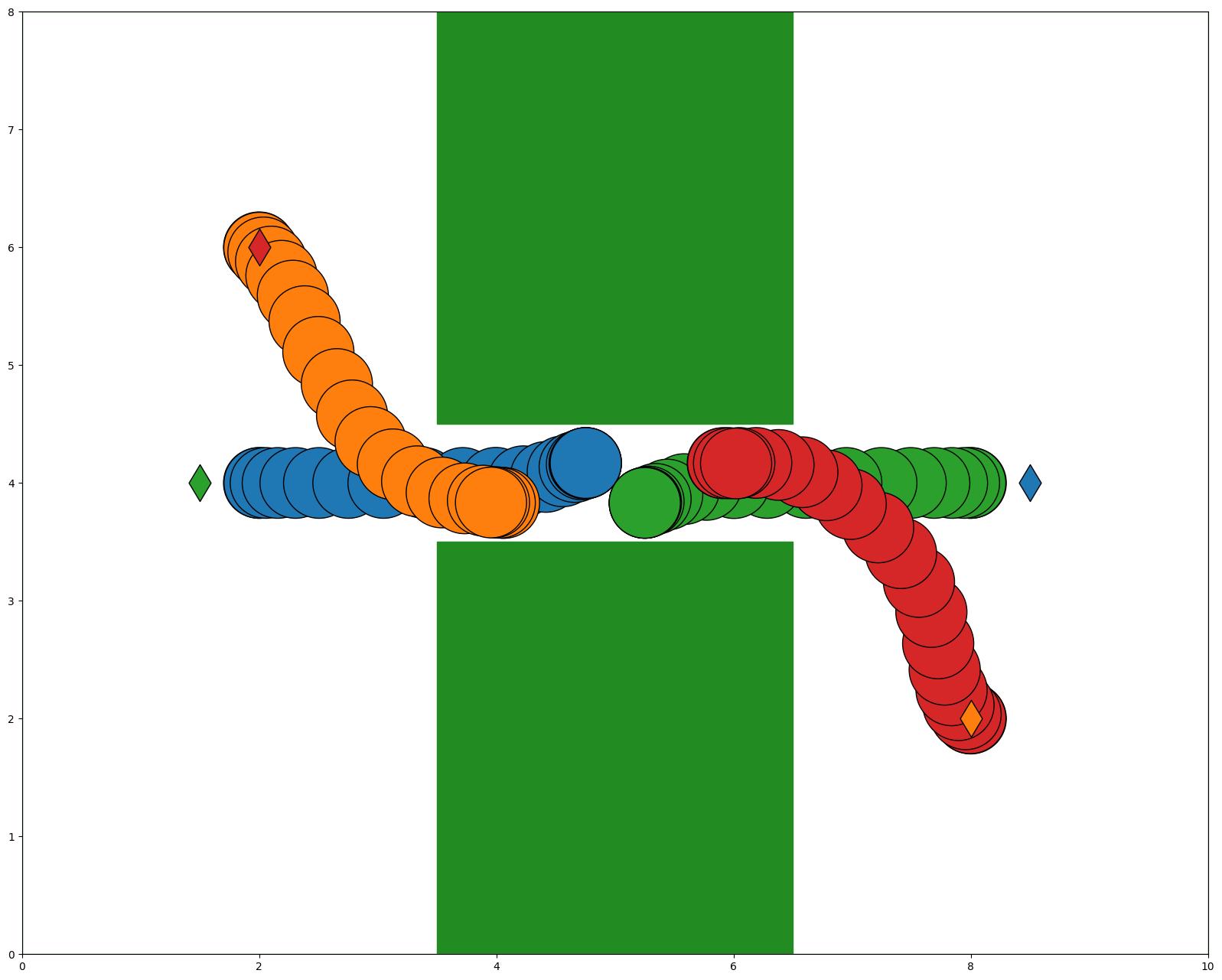}
		\includegraphics[width=0.47\linewidth]{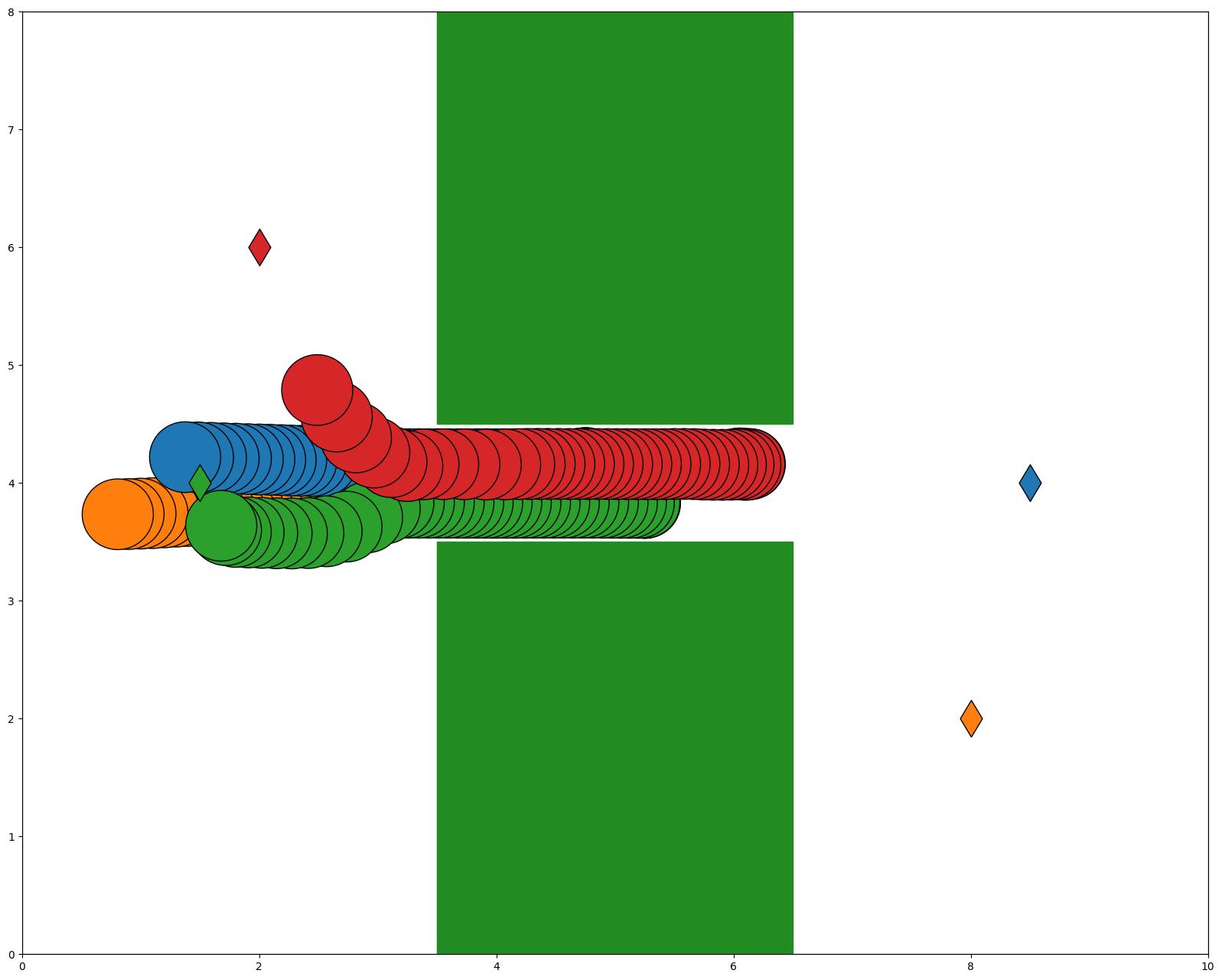}
	}
	\caption{
		\textbf{Left}: four robots remain being blocked by each other after only the right-hand rule is activated. \textbf{Right}: the green robot navigates through the corridor safely by expelling the blue and orange robots, under  the proposed dynamic-priority scheme.
		}
	\label{deadlock-resolution-in-obstacle}
\end{figure}

\section{Numerical Simulations and Experiments}

This section validates the performance of the proposed algorithm via numerical simulations and hardware experiments.
The algorithm is implemented in Python3 and publicly available at https://github.com/PKU-MACDLab/IMPC-OB.
All tests are performed on a computer with Intel Core i9 @3.2GHz and the team of robots is simulated using multiprocessing.
CVXOPT \cite{cvxopt} is used for solving the trajectory optimization, and OMPL \cite{ompl} for ABIT$^\star$ path planning.
The simulation and experiment videos can be found at https://youtu.be/Er7v9zQx784.

\subsection{Numerical Simulations and Comparison}

In the following simulations,
the main parameters of robots are chosen as follows:
$r_a=0.3{\rm m}$;
$a_{\text{max}}=2{\rm m/s^2}$;
$v_{\text{max}}=3{\rm m/s}$;
$h=0.15$s.

To begin with, the safe corridor generation is compared with IRIS~\cite{Deits2015}.
For IRIS, the centers of segmented points (shown as black points in Fig.~\ref{corridor-contrast}) are initialized to generate the corridor.
On average, IRIS takes $21$ms to compute a safe corridor in contrast to $4$ms via our method.
As shown in Fig.~\ref{corridor-contrast}, IRIS generates a larger interaction area than ours.
However, the trajectories generated by our method have a larger distance to the boundary of the safe corridors, yielding a larger safety area around critical regions.
This can be beneficial for ensuring the feasibility of underlying trajectory optimization.

To further validate our method, a rather extreme scenario is considered where four robots encounter in a narrow passage which only accommodates one robot.
As shown in Fig.~\ref{deadlock-resolution-in-obstacle}, even with the activated right-hand rule in \cite{Chen2022}, four robots are stuck in this passage as they all have the same priority.
However, since under the proposed algorithm the green robot obtains the highest priority after its parameter~$\eta^i$ reaches $\eta_\text{max}$, it can expel the blue and orange robots to continue its motion through the passage and reach the target.
Afterwards, the red robot follows the blue robot through the passage.

\begin{table} [t]
  \begin{center}
	\caption{Comparison With State-of-art Methods. (Safety: no collision occurs. $T_t[{\rm s}]$: transition time. $L_t[{\rm m}]$: the length of transition. $T_c[{\rm ms}]$: mean computation time per replanning.) }
	\label{table: comparison}
	\begin{tabular}{lllllll}
		\toprule
		&method &Safety &$T_t$ &$L_t$ &$T_c$ \\
		\midrule
		\multirow{5}{*}{``Forest"}&Ego-swarm \cite{Zhou2021} &No  &9.2  &105.8 &9.6 \\
		&MADER \cite{Jesus2021}    &No  &22.3 &111.1 &104.0 \\
		&LSC \cite{Park2022}       &Yes &22.3 &114.3 &53.2 \\
		&Ours                      &Yes &8.5  &102.2 &85.6 \\
		\midrule
		\multirow{5}{*}{``H"} &Ego-swarm \cite{Zhou2021} &No  &7.5  &66.6 &10.2 \\
		&MADER \cite{Jesus2021}    &No  &14.2 &71.5 &116.7 \\
		&LSC \cite{Park2022}       &Yes &- &- &62.3 \\
		&Ours                      &Yes &7.5  &65.8 &75.2 \\
		\bottomrule
	\end{tabular}
\end{center}
\end{table}

\begin{figure} [t]
	\centering
	\subfigure[Ego-swarm\cite{Zhou2021}]{
		\includegraphics[width=0.4\linewidth]{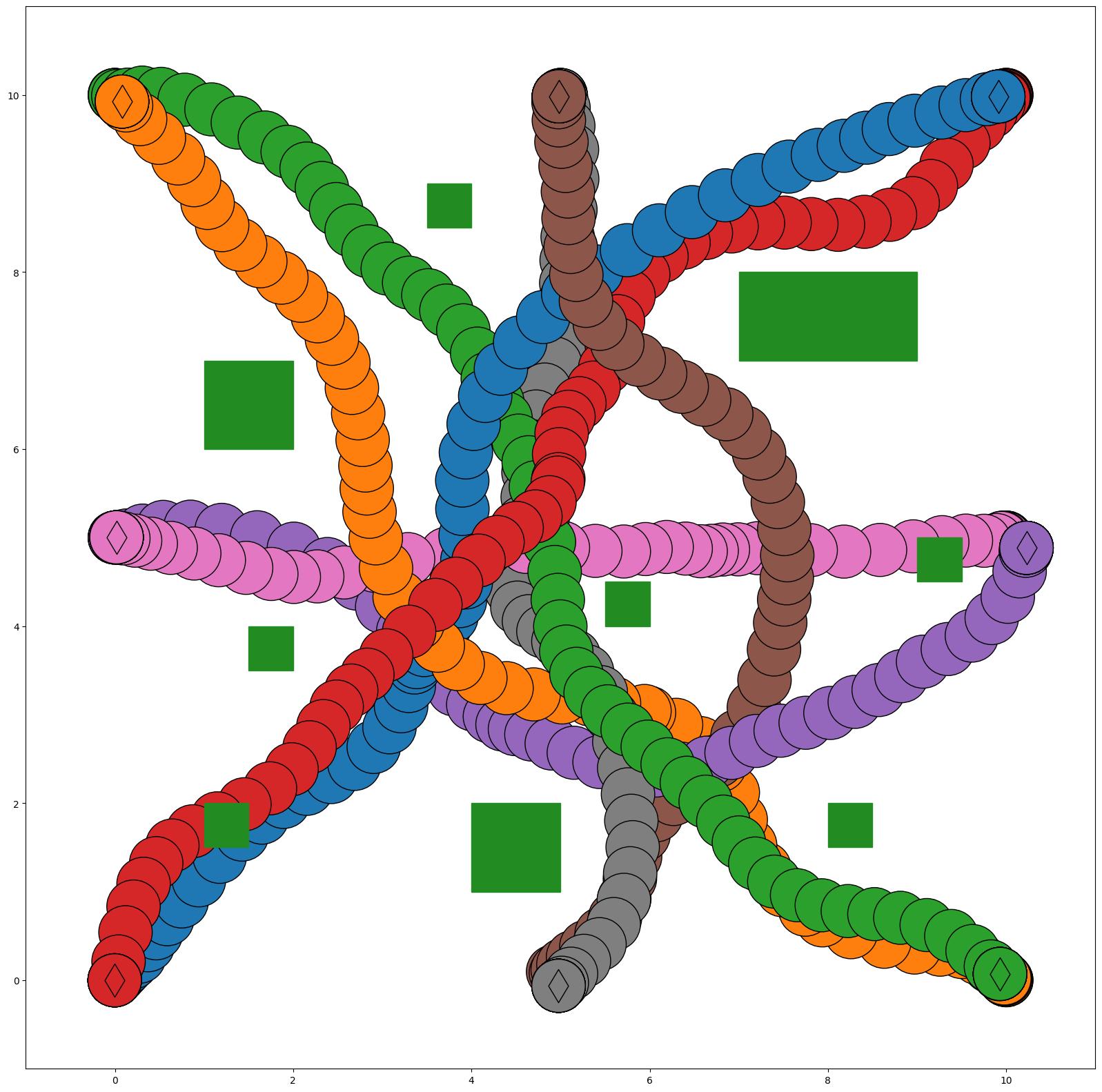}
	}
	\subfigure[MADER\cite{Jesus2021}]{
		\includegraphics[width=0.4\linewidth]{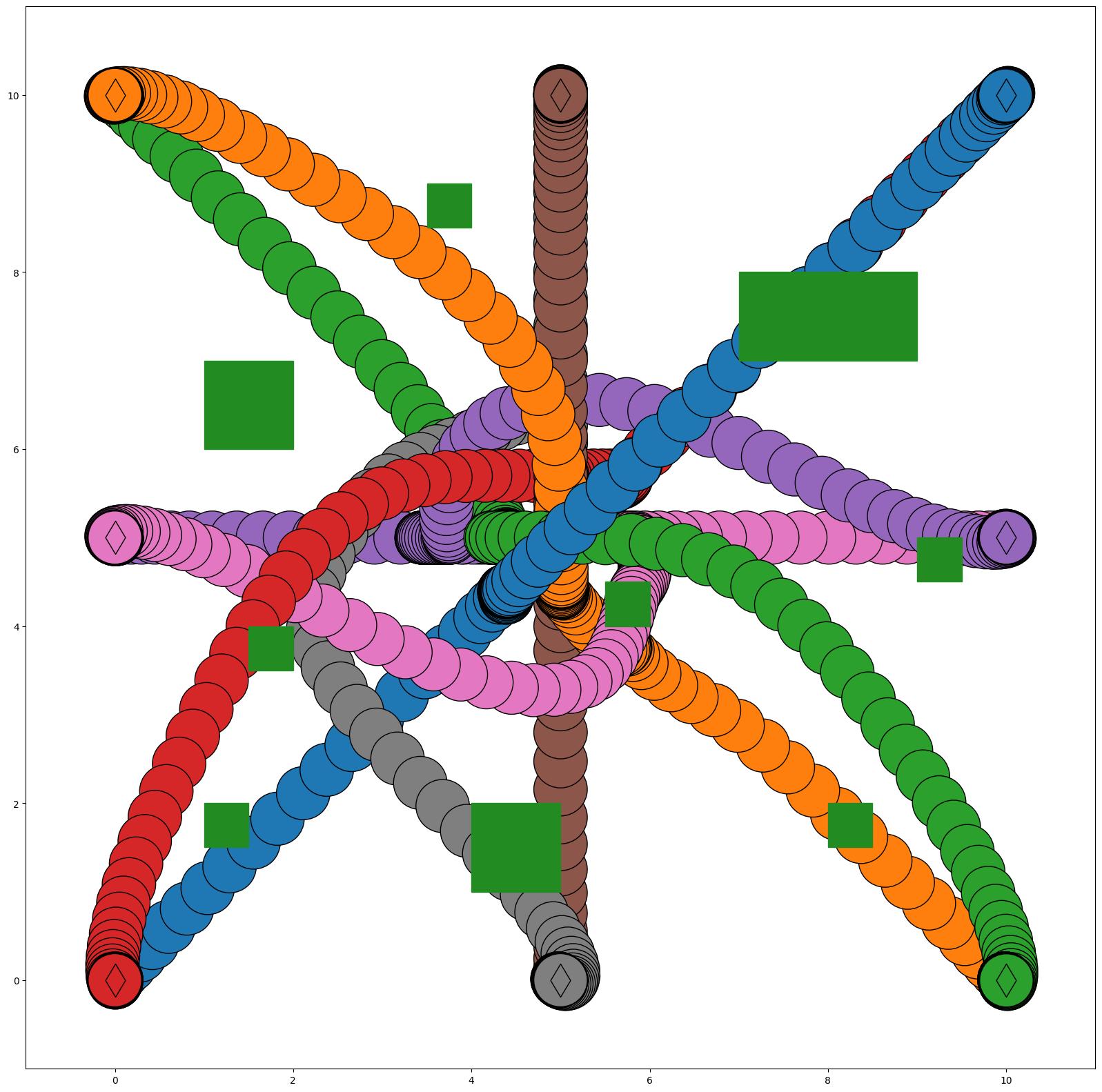}
	}
	\quad
	\subfigure[LSC\cite{Park2022} ]{
		\includegraphics[width=0.4\linewidth]{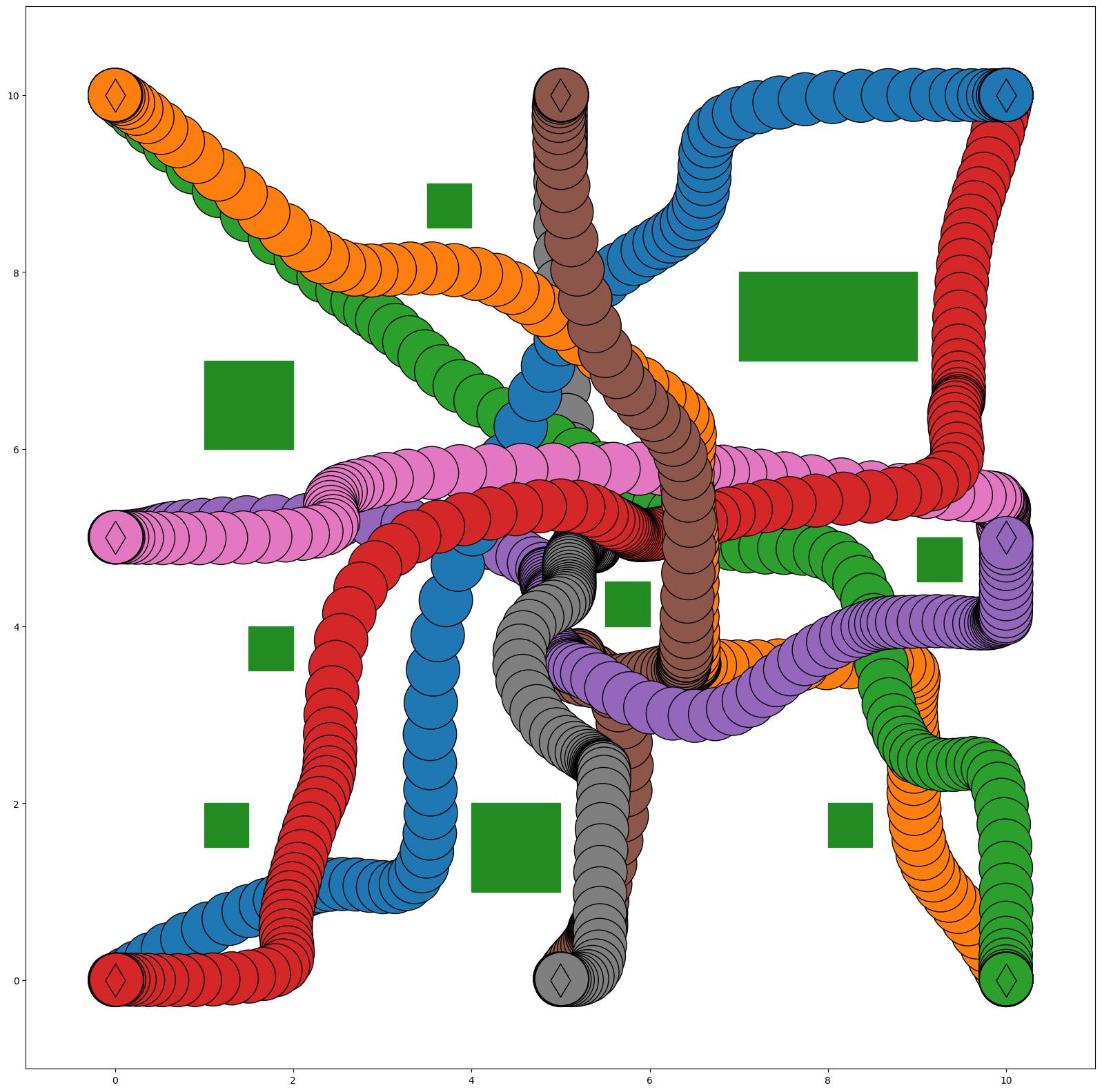}
	}
	\subfigure[Ours]{
		\includegraphics[width=0.4\linewidth]{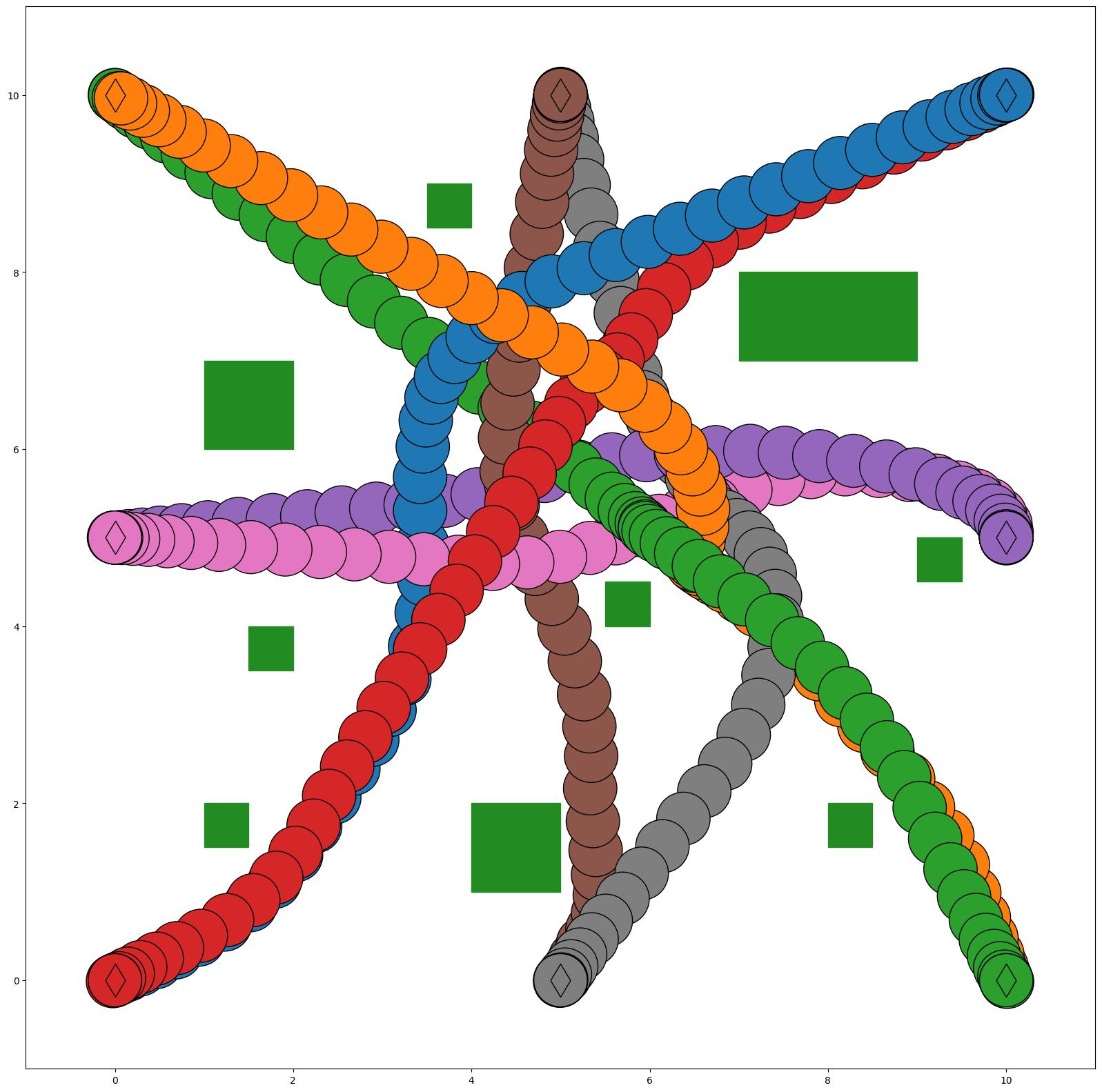}
	}
	\caption{Comparison of four planners in the ``Forest" scenario.}
	\label{Forest-transition}
\end{figure}

\begin{figure}
	\centering
	\subfigure[Ego-swarm\cite{Zhou2021}]{
		\includegraphics[width=0.4\linewidth]{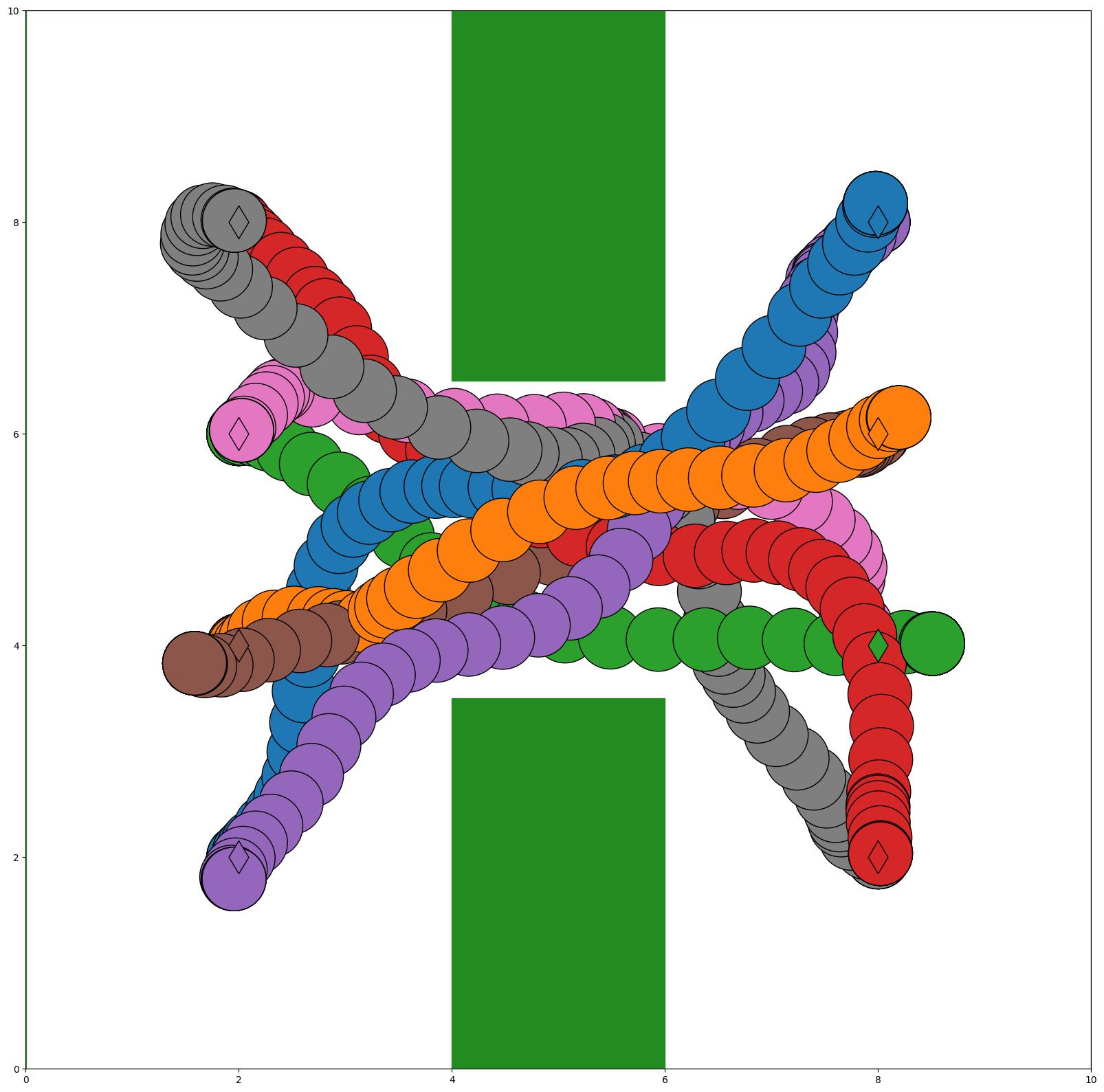}
	}
	\subfigure[MADER\cite{Jesus2021}]{
		\includegraphics[width=0.4\linewidth]{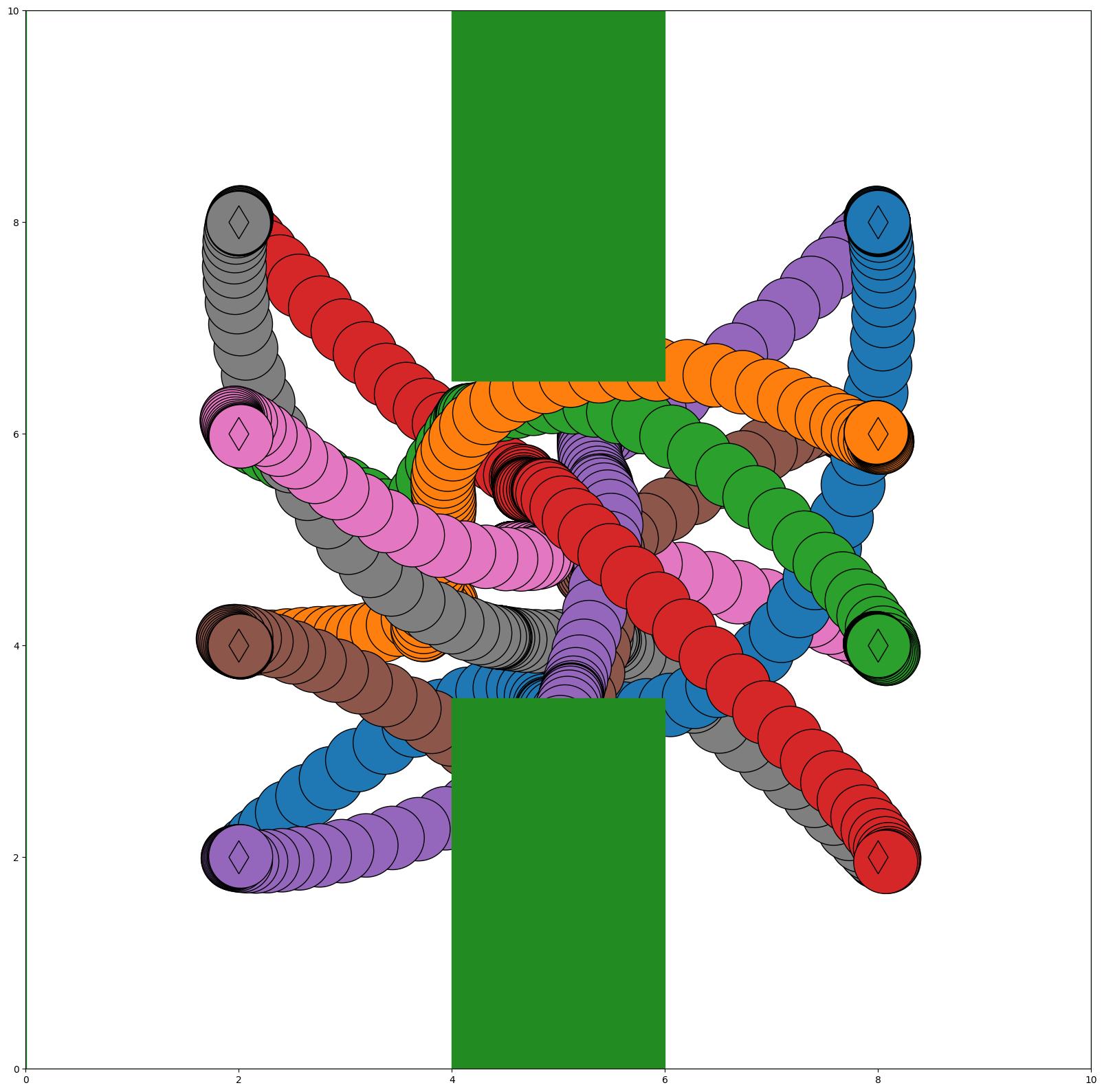}
	}
	\quad
	\subfigure[LSC\cite{Park2022} ]{
		\includegraphics[width=0.4\linewidth]{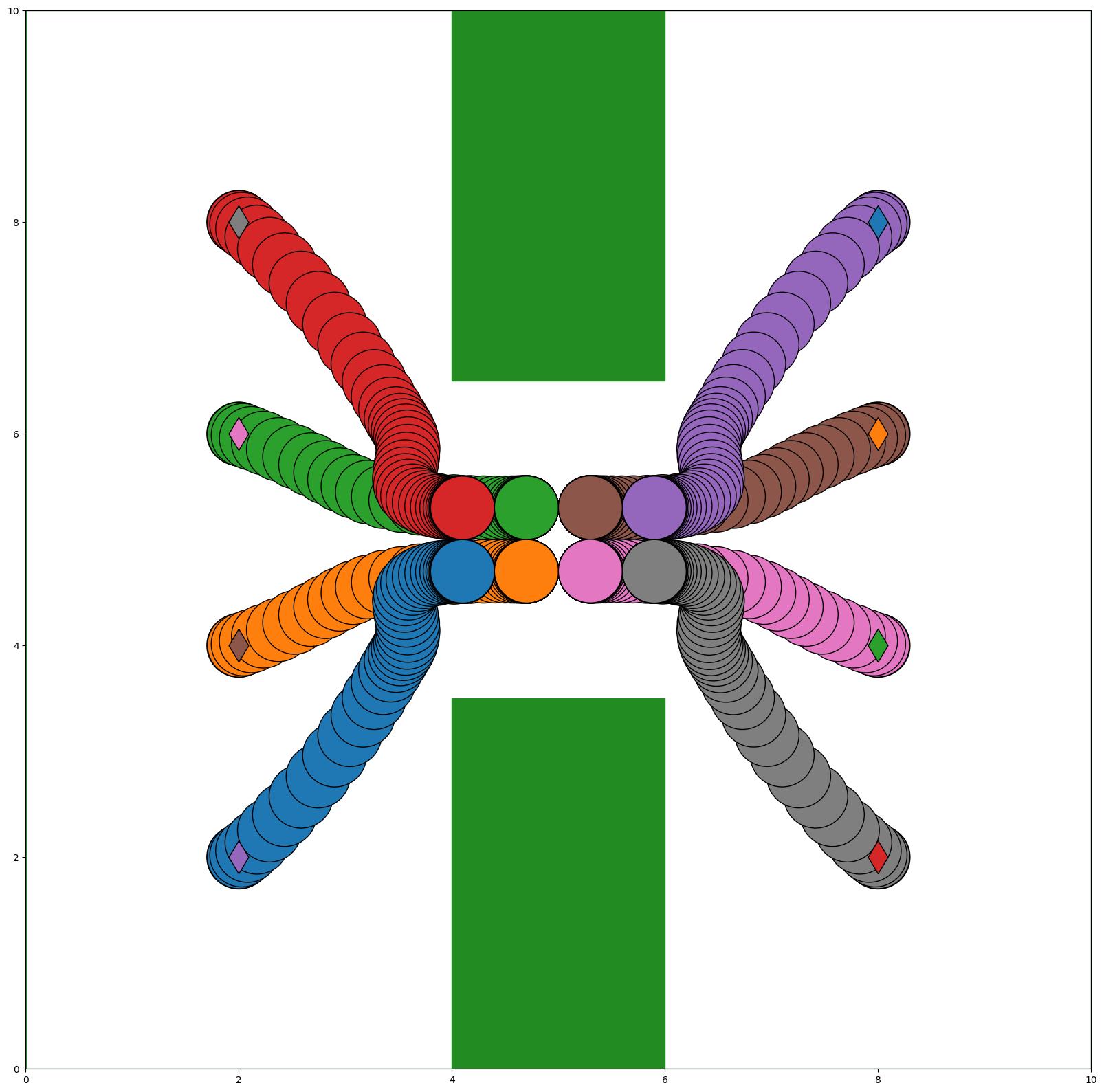}
	}
	\subfigure[Ours]{
		\includegraphics[width=0.4\linewidth]{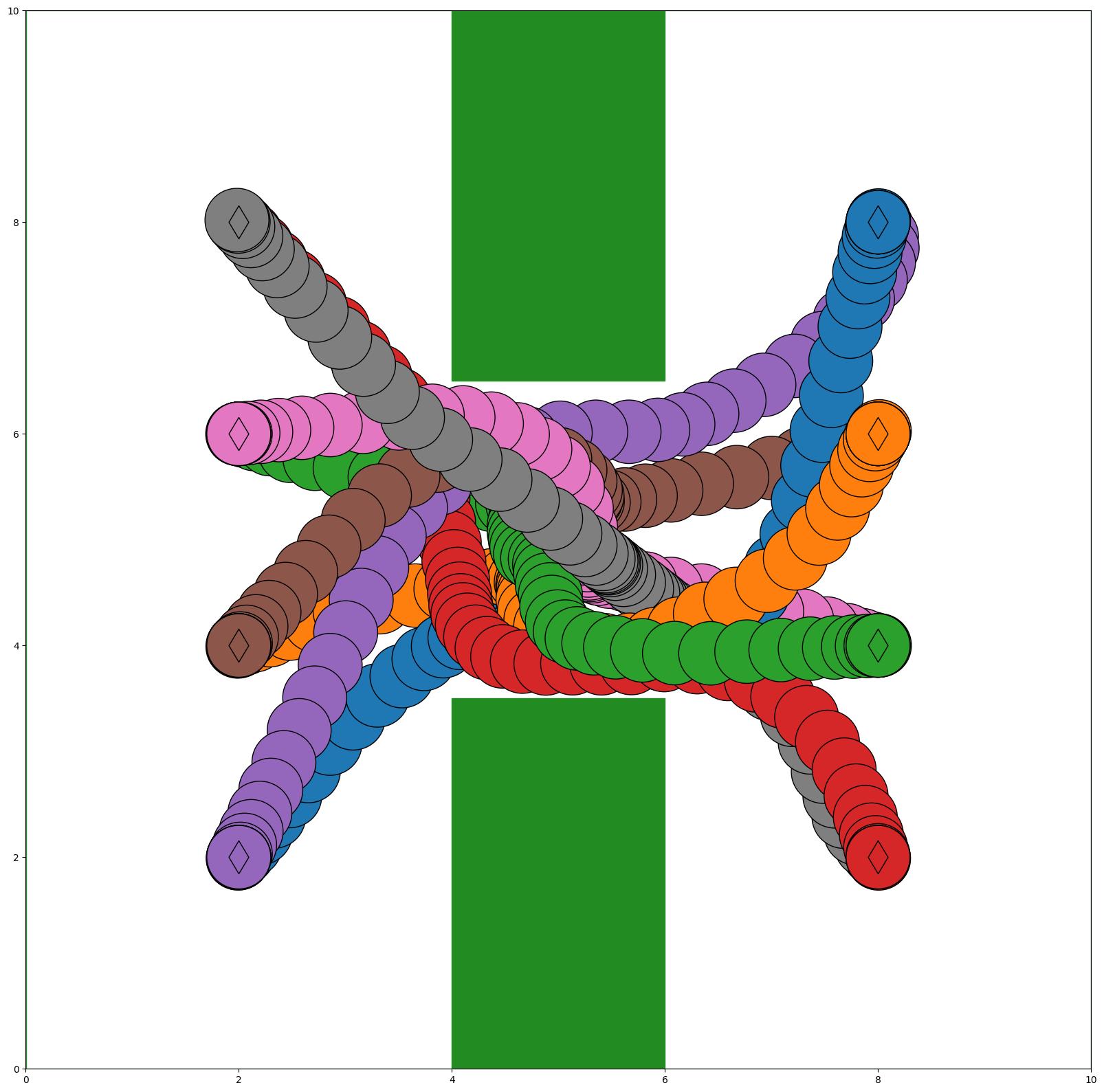}
	}
	\caption{Comparison of four planners in the ``H'' scenario.}
	\label{H-transition}
\end{figure}

\begin{figure}[t]
	\centering
	\includegraphics[width=0.98\linewidth]{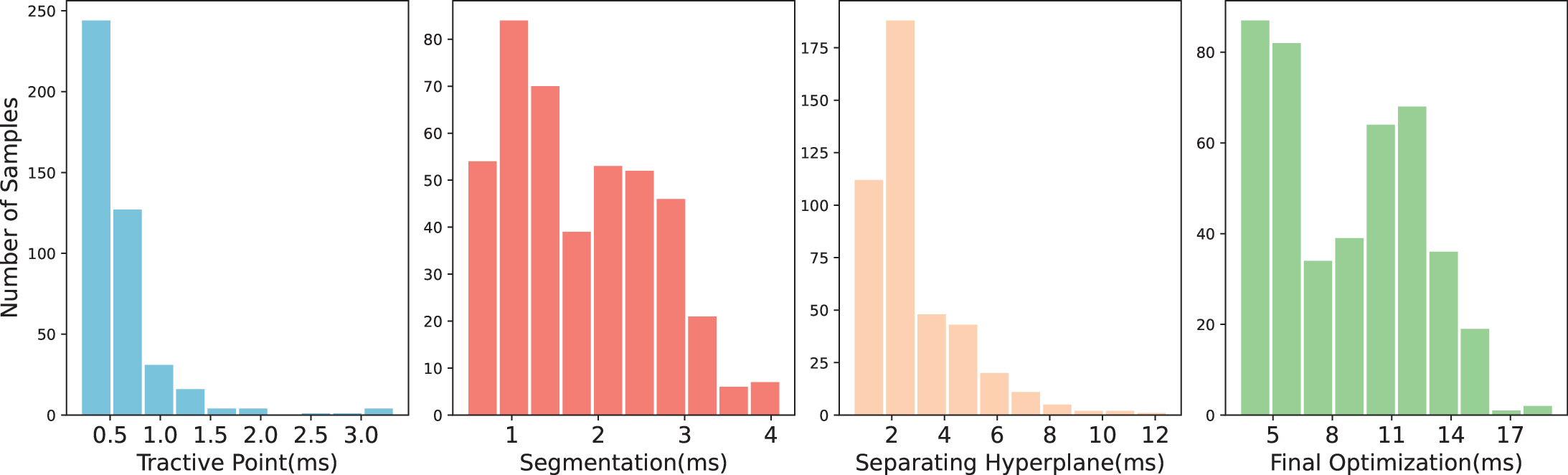}
	\caption{
		The distribution of computation cost in four main steps of the replanning.
		}
	\label{time-cost}
\end{figure}

Furthermore, comparisons with Ego-swarm~\cite{Zhou2021}, MADER~\cite{Jesus2021} and LSC~\cite{Park2022} are made within the ``Forest" and ``H" environments.
Sampled trajectories of these scenarios are illustrated in Figs.~\ref{Forest-transition} and~\ref{H-transition}, respectively.
The results are summarized in Table~\ref{table: comparison}.
Ego-swarm has an impressive computation time in addition to a relatively smooth and fast trajectories.
Unfortunately, it cannot guarantee the collision avoidance, as it adopts an unconstrained optimization in planning.
For MADER, the optimization under the strict constraints guarantees the avoidance between robots.
However, regarding obstacle avoidance, MADER is not well performed in an obstacle-dense environment as several collisions appear.
LSC has the feasibility guarantee which ensures the safety of robots, but its trajectory is not soft since the cubic safe corridor.
Moreover, deadlocks occurs in ``H"-transition as the heuristic deadlock resolution method in LSC is ineffective in this scenario.
In contrast, the adaptive right-hand rule is leveraged by the proposed method, resulting in the right-hand rotations in the bottleneck of the ``H".
Moreover, it can be seen that the proposed planner generates a much faster and smoother trajectories as the result of high-efficiency safe corridor.

Last but not the least, the time cost distribution of the four main steps: finding the tractive point, segmentation of the EPT, computing the separating planes and solving the final optimization~\eqref{eq:final-mpc} is shown in Fig.~\ref{time-cost}.
The result is collected from the ``Forest'' scenario shown in Fig.~\ref{Forest-transition}.
It can be seen that finding the tractive point takes a negligible time and computing separating planes consumes more time than the segmentation of EPT.
In contrast, most of the planning time is spent on solving the final optimization for the desired trajectory.

\begin{figure}[t!]
	\centering
	\subfigure[\textbf{Left}: final trajectories of the 8-robot swarm where the ellipsoids indicate the safety area around robots. \textbf{Right}: the inter-robot distances over all time. ]{
		\includegraphics[width=0.41\linewidth,height=0.40\linewidth]{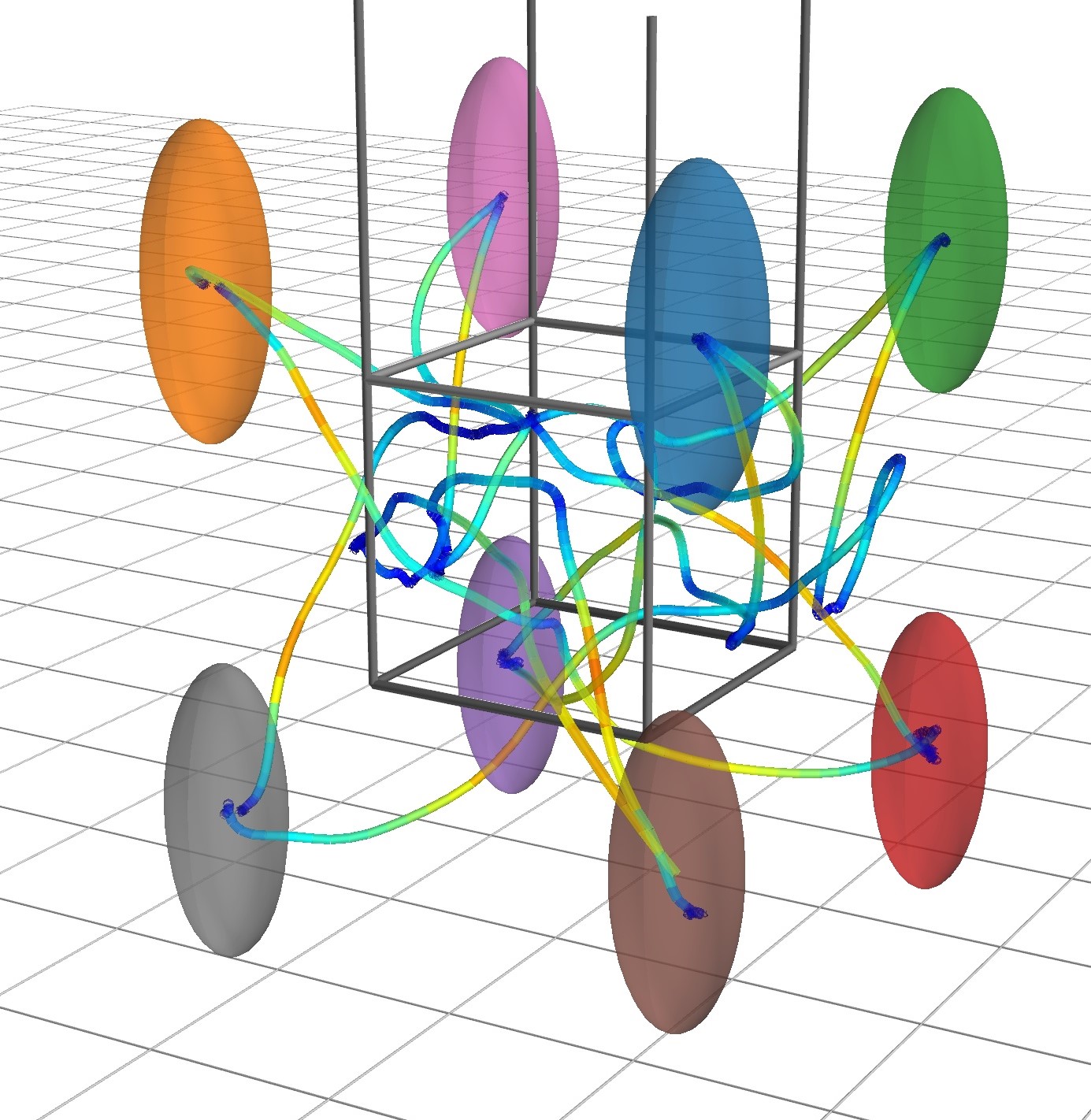}
     	\includegraphics[width=0.43\linewidth,height=0.40\linewidth]{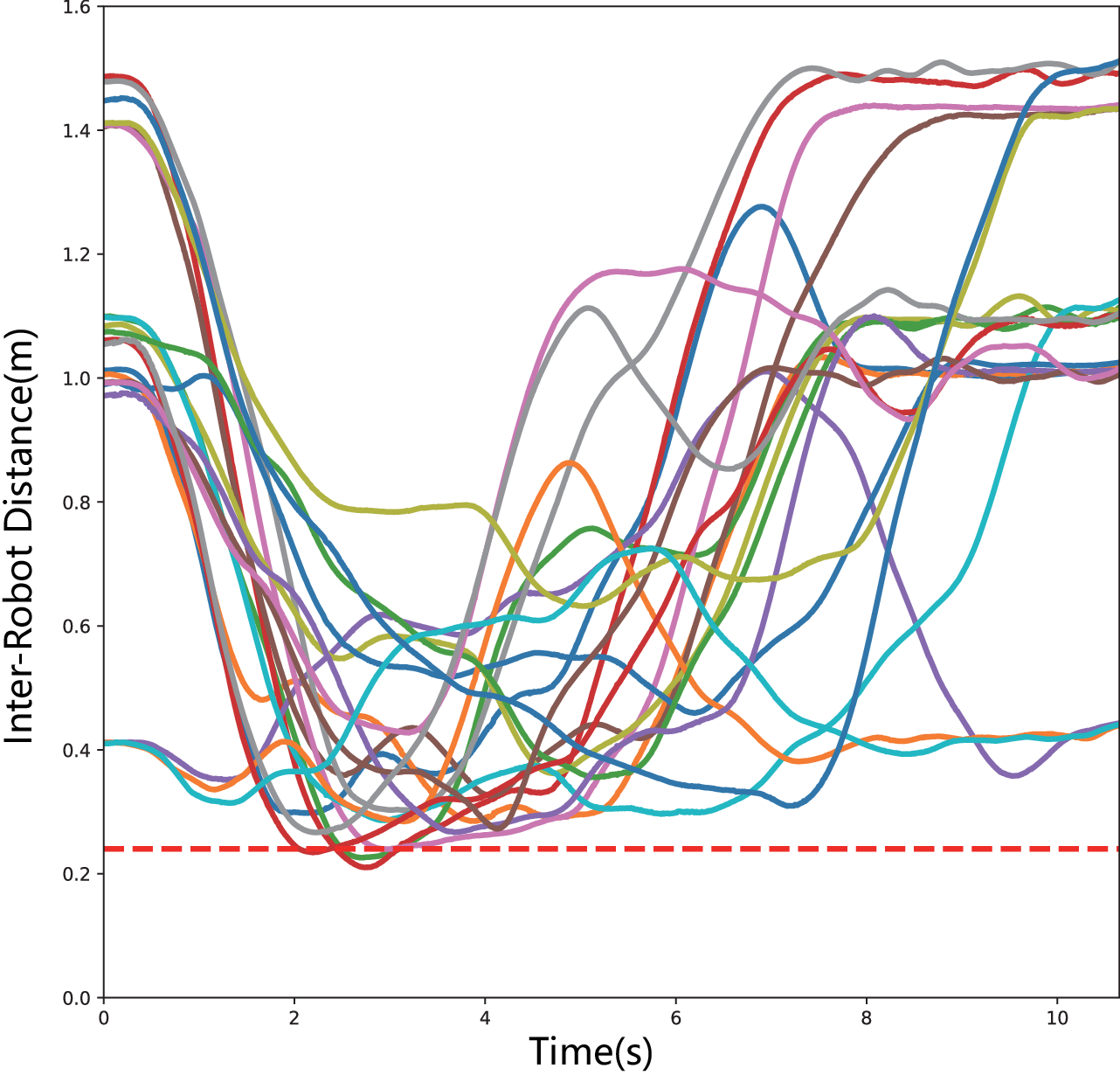}
		\label{cubic-framework}
	}
	\subfigure[\textbf{Left}: four nano-quadrotors are deployed in a polygon-shape environment. \textbf{Right}: Zoomed-in snapshot when one quadrotor navigates through the narrow passage, while another quadrotor makes room via the proposed deadlock resolution scheme.]{
		\includegraphics[width=0.42\linewidth,height=0.4\linewidth]{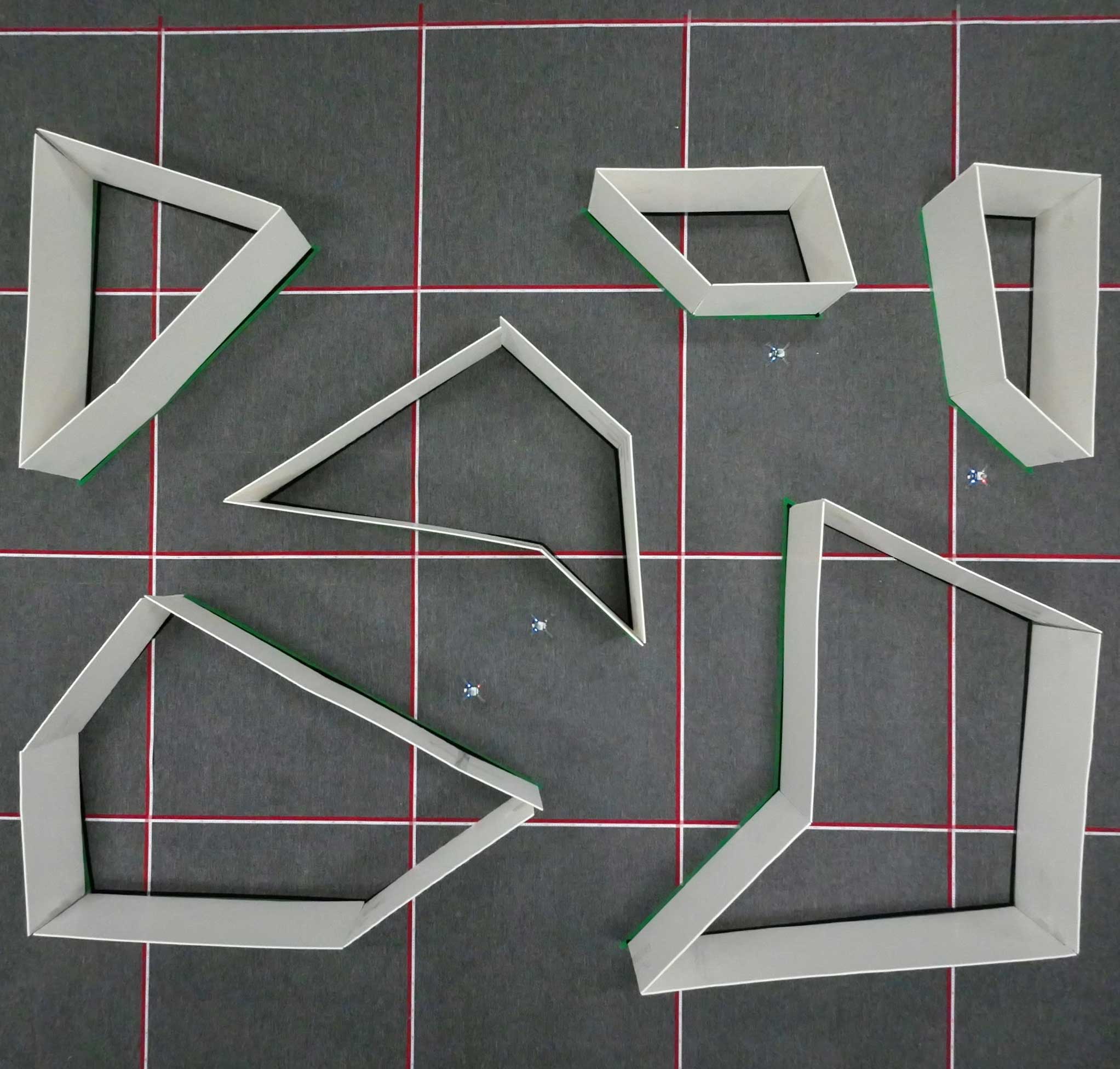}
		\includegraphics[width=0.42\linewidth,height=0.4\linewidth]{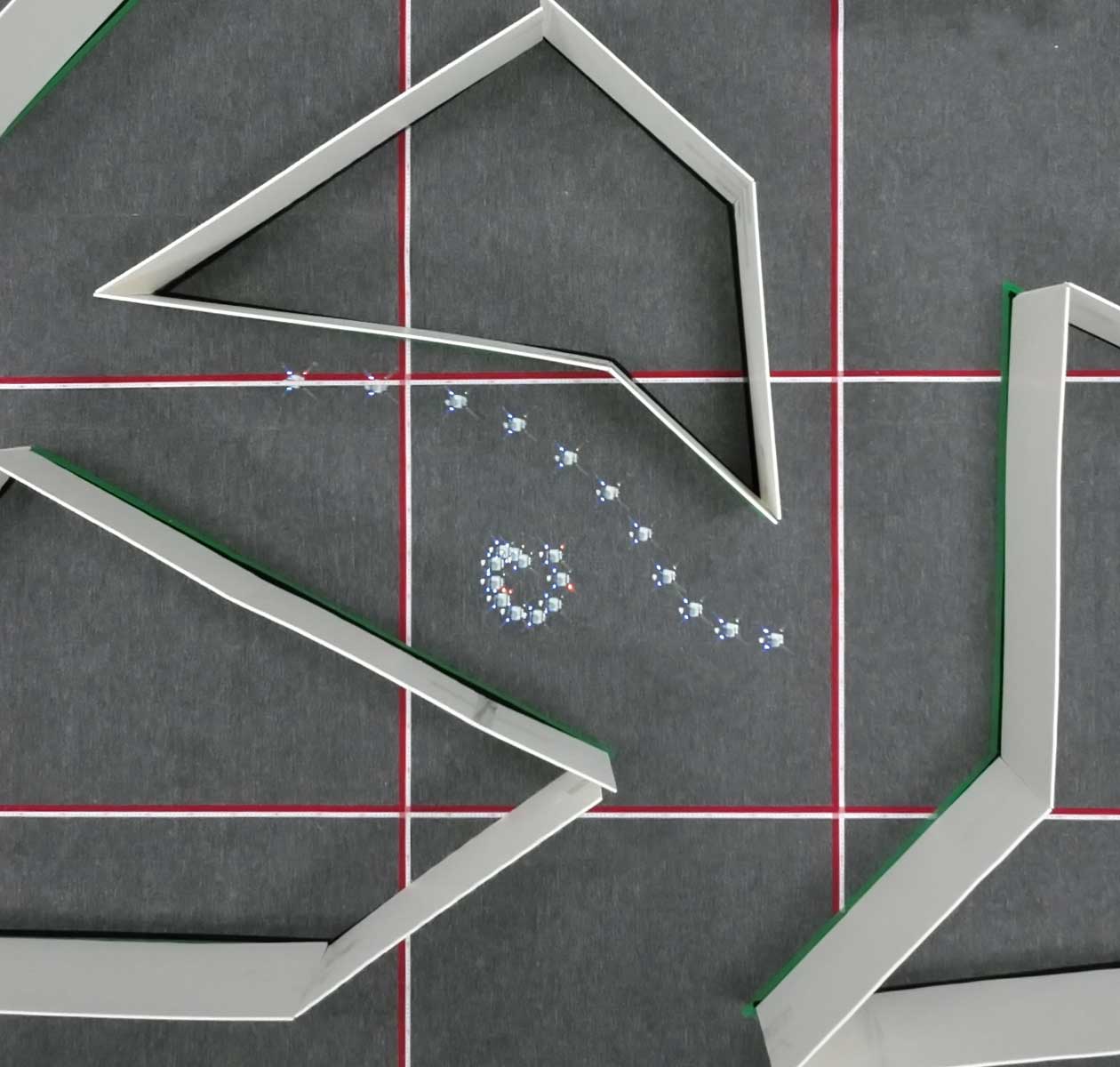}
		\label{complex-scenario}
	}
	\subfigure[\textbf{Left}: experiment results of 8 robots navigating in the ``H" scenario. \textbf{Right}: experiment results of 6 robots switching positions in the ``n" scenario. ]{
		\includegraphics[width=0.42\linewidth,height=0.4\linewidth]{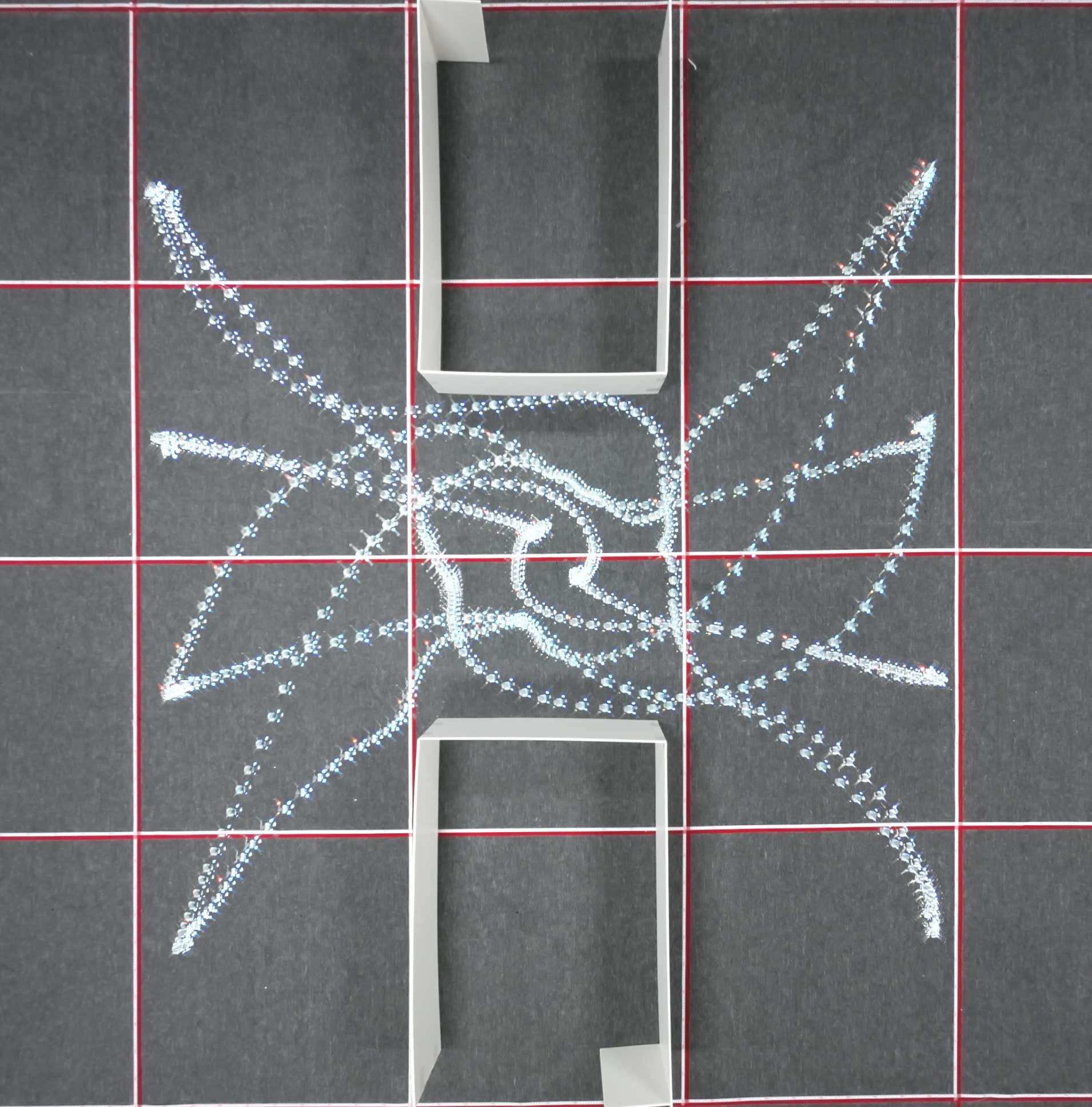}
		\includegraphics[width=0.42\linewidth,height=0.4\linewidth]{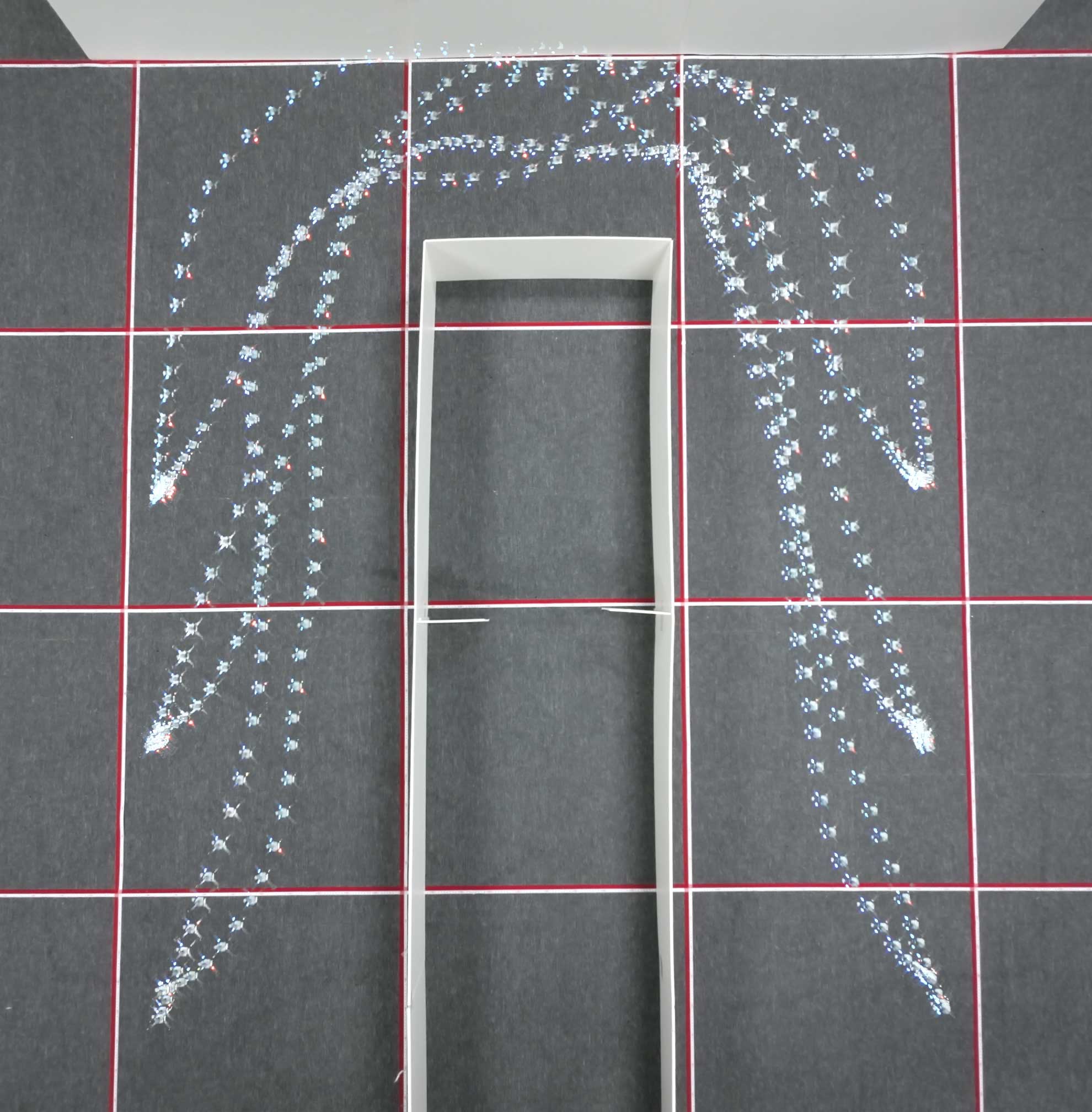}
		\label{other-exp}
	}
	\caption{Different test environments in the hardware experiments.}
	\label{experiment}
\end{figure}

\subsection{Hardware Experiments}

Hardware experiments are executed on the platform of crazyswarm \cite{crazyswarm}, where multiple nano-quadrotors are tracked by the motion capture system OptiTrack.
The computation of all robots takes place on a central computer with multiple processes at $5$Hz frequency to comply with the sampling time step $h=0.2$s.
For each crazyfile, a feedback controller~\cite{Mellinger2011} is adopted to track the planned trajectory.

The first experiment is shown in Fig.~\ref{cubic-framework}, where $8$ crazyfiles fly through a $0.6{\rm m}$ cubic framework.
Considering the air turbulence, a crazyfile in the inter-robot avoidance is represented as an ellipsoid with diameter $0.24$m in the $x-y$ plane and $0.6$m in the $z$ axis.
Owing to the deformation of robots, the inter-robot constraints are accordingly adjusted by modifying $a^{i j}_k$ and $b^{i j}_k$ as
\begin{equation*}
	a_{k}^{i j}=E \frac{ E (\overline{p}_{k}^{i}- \overline{p}_{k}^{j})  } { \|E (\overline{p}_{k}^{i}-\overline{p}_{k}^{j}) \|_{2} },\quad
	b_{k}^{i j}={a_{k}^{i j}}^\mathrm{T} \frac{E (\overline{p}_{k}^{i} +  \overline{p}_{k}^{j})}{2}+\frac{r_{\min }}{2},
\end{equation*}
where $E={\rm diag}(1.0,1.0,0.4)$, $r_{\min }=\sqrt{4 {r_a}^2+v_{\text{max}}^2}$ and $r_a=0.12$m.
From the result given in Fig~\ref{cubic-framework}, it is apparent that the crazyfiles can achieve this transition.

In addition, four crazyfiles moving in a polygonal environment as shown in Fig~\ref{complex-scenario}.
Given initial positions, the targets are randomly chosen.
After arriving at the targets, the new one are published immediately and this process is repeated $5$ times.
In this scenario, the feasible space is the irregular-shaped passage at the interval of polygon-shaped obstacles where the width of these passages ranges from~$0.4$m to~$0.7$m.
Via the proposed deadlock resolution scheme, particularly the dynamic-priority mechanism, a robot can smoothly squeeze out a way between the obstacle and another lower-priority robot as shown in Fig.~\ref{complex-scenario}.

At last, other two experiments as shown in Fig.~\ref{other-exp} are carried out,
where the scenarios are ``H" and ``n" environments respectively.
Therein, both the safety and deadlock resolution are achieved in narrow passages.
In ``H"-transition, the potential deadlock is resolved via right-hand rule and the transition is completed within $9$s.
Regarding the ``n"-transition, the intersection of two groups of quadrotors at the top passage is the main challenge for this mission.
It can be seen that the proposed method resolves it as the quadrotors fly through the passage without sacrificing any speed.
As a result, the task is finished within $8$s.

\section{Conclusion}

This work has proposed a novel multi-robot trajectory planning method for obstacle-dense environments, where the collision avoidance is guaranteed and
potential deadlocks are resolved online.
Comprehensive simulations and hardware experiments have been conducted to validate the proposed method.
However, a theoretical guarantee on the resolution of deadlocks or even livelocks remains open,
which is part of our ongoing work.

\bibliographystyle{IEEEtran}
\bibliography{REF}

\end{document}